\newcommand{\Ind}[1]{\ensuremath{\mathbbm{1} \hspace{-0.03in} \left[#1\right]}}                     % Indicator function.
\newcommand{\InNorm}[1]{{\left\vert\kern-0.2ex\left\vert\kern-0.2ex\left\vert #1 
    \right\vert\kern-0.2ex\right\vert\kern-0.2ex\right\vert}}                    % Induced Norm.
\newcommand{\InNormII}[1]{{\left\vert\kern-0.2ex\left\vert\kern-0.2ex\left\vert #1 
    \right\vert\kern-0.2ex\right\vert\kern-0.2ex\right\vert}_2}                    % Induced 2 Norm (Spectral Norm).
\newcommand{\InNormInfty}[1]{{\left\vert\kern-0.2ex\left\vert\kern-0.2ex\left\vert #1 
    \right\vert\kern-0.2ex\right\vert\kern-0.2ex\right\vert}_{\infty}}           % Induced Infinity norm.
\newcommand{\Abs}[1]{\ensuremath{\lvert #1 \rvert}}                              % Absolute value.
\newcommand{\AAbs}[1]{\ensuremath{\left \lvert #1 \right \rvert}}                              % Absolute value auto braces.
\newcommand{\Inner}[2]{\langle #1, #2 \rangle}                    % Inner product
\newtheorem{definition}{Definition}
\newtheorem{proposition}{Proposition}
\newtheorem{lemma}{Lemma}
\newtheorem{theorem}{Theorem}
\newtheorem{remark}{Remark}
\newcommand{\subto}{\mathrm{subject\ to}}
\def\1{\bm{1}}
\def\vzero{{\bm{0}}}
\def\vone{{\bm{1}}}
\def\vmu{{\bm{\mu}}}
\def\va{{\bm{a}}}
\def\vu{{\bm{u}}}
\def\vv{{\bm{v}}}
\def\vy{{\bm{y}}}
\def\eva{{a}}
\def\mA{{\bm{A}}}
\def\mD{{\bm{D}}}
\def\mL{{\bm{L}}}
\def\mM{{\bm{M}}}
\def\mS{{\bm{S}}}
\def\mW{{\bm{W}}}
\def\mX{{\bm{X}}}
\def\mY{{\bm{Y}}}
\def\mLambda{{\bm{\Lambda}}}
\DeclareMathAlphabet{\mathsfit}{\encodingdefault}{\sfdefault}{m}{sl}
\SetMathAlphabet{\mathsfit}{bold}{\encodingdefault}{\sfdefault}{bx}{n}
\def\gC{{\mathcal{C}}}
\def\gJ{{\mathcal{J}}}
\def\gK{{\mathcal{K}}}
\def\gO{{\mathcal{O}}}
\def\sP{{\mathbb{P}}}
\def\sR{{\mathbb{R}}}
\def\emA{{A}}
\newcommand{\tand}{\mathrm{and}}
\newcommand{\E}{\mathbb{E}}
\DeclareMathOperator{\sign}{sign}
\DeclareMathOperator{\Tr}{Tr}
\title{A Thorough View of Exact Inference in Graphs from the Degree-4 Sum-of-Squares Hierarchy}
\author{
Kevin Bello\footnotemark[2]
\qquad
Chuyang Ke\footnotemark[2]
\qquad 
Jean Honorio\footnotemark[2]\\
\footnotemark[2]\ \  Department of Computer Science, Purdue University, West Lafayette, IN, USA.
}
\begin{document}
\maketitle

\begin{abstract}
	Performing inference in graphs is a common task within several machine learning problems, e.g., image segmentation, community detection, among others. 
	For a given undirected connected graph, we tackle the statistical problem of \textit{exactly} recovering an unknown ground-truth binary labeling of the nodes from a \textit{single} corrupted observation of each edge.
	Such problem can be formulated as a quadratic combinatorial optimization problem over the boolean hypercube, where it has been shown before that one can (with high probability and in polynomial time) exactly recover the ground-truth labeling of graphs that have an isoperimetric number that grows with respect to the number of nodes (e.g., complete graphs, regular expanders).
	In this work, we apply a powerful hierarchy of relaxations, known as the sum-of-squares (SoS) hierarchy, to the combinatorial problem.
	Motivated by empirical evidence on the improvement in exact recoverability, we center our attention on the degree-4 SoS relaxation and set out to understand the origin of such improvement from a graph theoretical perspective.
	We show that the solution of the dual of the relaxed problem is related to finding edge weights of the Johnson and Kneser graphs, where the weights fulfill the SoS constraints and intuitively allow the input graph to increase its algebraic connectivity.
	Finally, as byproduct of our analysis, we derive a novel Cheeger-type lower bound for the algebraic connectivity of graphs with \textit{signed} edge weights.
\end{abstract}

\section{Introduction}
	Inference in graphs spans several domains such as social networks, natural language processing, computational biology, computer vision, among others. 
	For example, let $\mX \in \{-1,0,+1\}^{n \times n}$ be some \textit{noisy} observation, e.g., a social network (represented by a graph), where the output is a labeling $\vy \in \{-1,+1\}^n$ of the nodes, e.g., an assignment of each individual to a cluster.
	In the example, for the entries of $\mX$, a value of $0$ means no interaction (no edge) between two individuals (nodes), a value of $+1$ can represent an agreement of two individuals, while a value of $-1$ can represent disagreement.
	One can then predict a labeling $\vy$ by solving the following quadratic form over the hypercube $\{\pm 1\}^n$,
	\begin{align}
		\max_{\vy \in \{\pm 1\}^n} \vy^\top \mX \vy. \label{eq:initial_opt}		
	\end{align}
	The above formulation is a well-studied problem that arises in multiple contexts, including Ising models from statistical physics \citep{barahona1982computational}, finding the maximum cut of a graph \citep{goemans1995improved}, the Grothendieck problem \citep{grothendieck1956resume, khot2011grothendieck}, stochastic block models \citep{abbe2016exact}, and structured prediction problems \citep{globerson2015hard}, to name a few.
	However, the optimization problem above is NP-hard in general and only some cases are known to be exactly solvable in polynomial time.
	For instance, \citet{chandrasekaran2008complexity} showed that it can be solved exactly in polynomial time for a graph with low treewidth via the junction tree algorithm; \citet{schraudolph2009efficient} showed that the inference problem can also be solved exactly in polynomial time for planar graphs via perfect matchings; while \citet{boykov2006graph} showed that \eqref{eq:initial_opt} can be solved exactly in polynomial time via graph cuts for binary labels and sub-modular pairwise potentials.
			
	In this work, we consider a generative model proposed by \citet{globerson2015hard} in the context of structured prediction, and study which conditions on the graph allow for exact recovery (inference).
	In this case, following up with the example on social networks above, each individual can have an opinion labeled by $-1$ or $+1$, and for each pair of individuals that are \textit{connected}, we observe a \textit{single} measurement of whether or not they have an agreement in opinion, but the value of each measurement is flipped with probability $p$. 
	Since problem \eqref{eq:initial_opt} is a hard computational problem, it is common to relax the problem to a convex one.
	In particular, \citep{abbe2016exact,amini2018semidefinite,bello2019exact,bello2020fairness} studied the sufficient conditions for a semidefinite programming relaxation (SDP) of problem \eqref{eq:initial_opt} to achieve exact recovery.
	In contrast to those works, we will focus on the sum-of-squares (SoS) hierarchy of relaxations \citep{parrilo2000structured,lasserre2001global,barak2014sum}, which is a sequential tightening of convex relaxations based on SDP.
	We study the SoS hierarchy because it is tighter than other known hierarchies such as the Sherali-Adams and Lov{\'a}sz-Schrijver hierarchies~\citep{laurent2003comparison}.
	In addition, our motivation to study the level-2 or degree-4 SoS relaxation stems from three reasons.
	First, higher-levels of the hierarchy, while polynomial time solvable, are already computationally very costly.
	This is one of the reasons the SoS hierarchy have been mostly used as a proof system for finding lower bounds in hard problems (e.g., for the planted clique problem, see \citep{meka2015sum}).
	Second, little is still known about the level-2 SoS relaxation, where \cite{bandeira2018gramian}  and \cite{cifuentes2020geometry} are attempts to understand its geometry. 
	Third, there is empirical evidence on the improvement in exact recoverability with respect to SDP, an example of which is depicted in Figure \ref{fig:intro}.
	\begin{figure}[!tb]
		\centering	
		\includegraphics[width=0.4\linewidth]{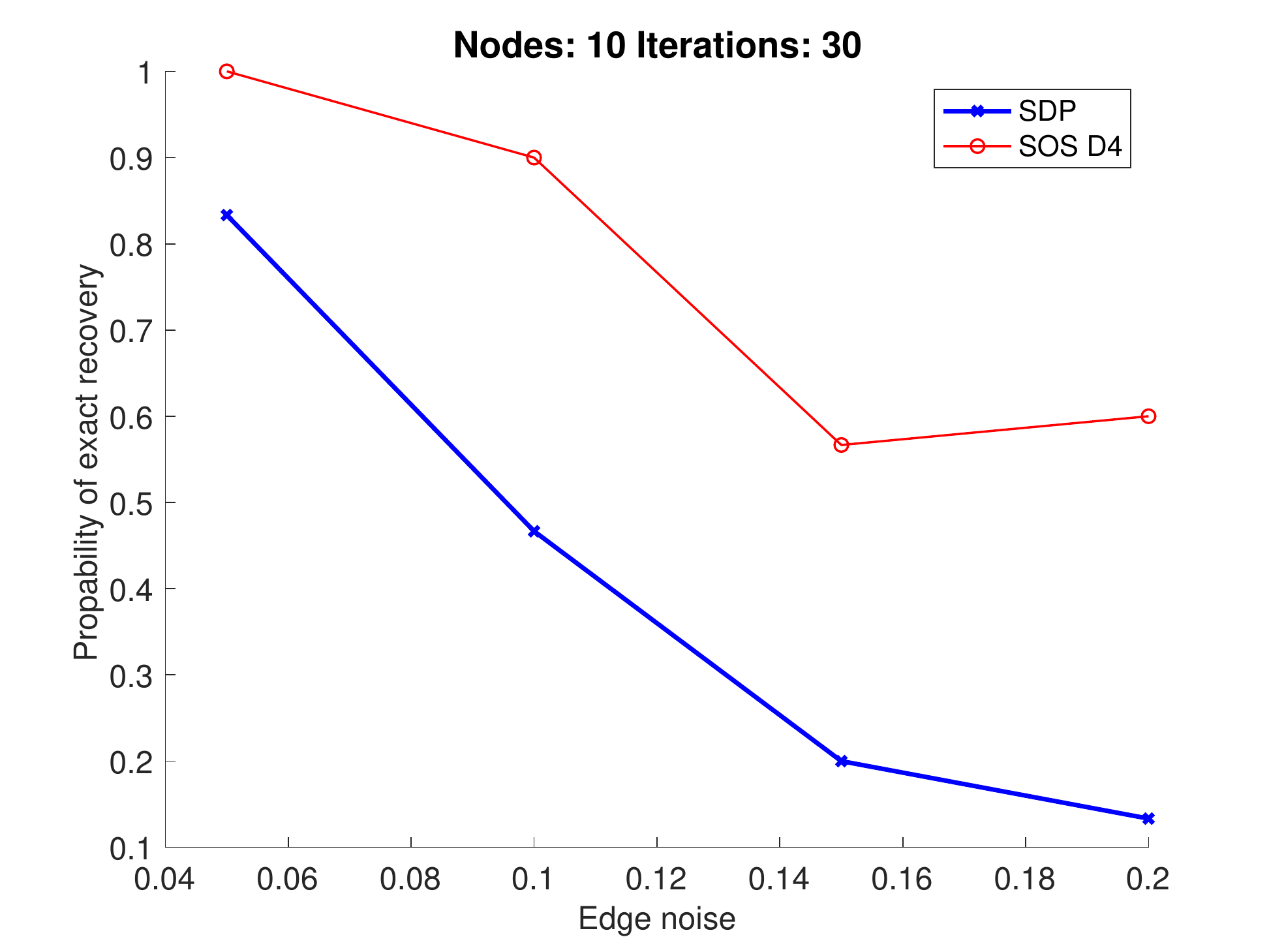}
		\caption{
		A comparison between the degree-4 SoS and SDP relaxations in the context of structured prediction. 
		We observe that SoS attains a higher probability of exact recovery, for different levels of edge noise $p$. 
		(See Section \ref{sec:prelim} for a formal problem definition.)
		}
		\label{fig:intro}
	\end{figure}			
	
	\paragraph{Contributions.}
	While it is known that the level-2 SoS relaxation has a tighter search space than that of SDP,
	it is not obvious why it can perform better than SDP for \textit{exact recovery}.
	In this work, we aim to understand the origin of such improvement from a graph theoretical perspective.
	We show that the solution of the dual of the relaxed problem is related to finding edge weights of the Johnson and Kneser graphs, where the weights fulfill the SoS constraints and intuitively allow the input graph to increase its algebraic connectivity.
	Finally, as byproduct of our analysis, we derive a novel Cheeger-type lower bound for the algebraic connectivity of graphs with \textit{signed} edge weights.
	
	We emphasize that the objective of this work is on the \emph{understanding} of exact recoverability by using the degree-4 SoS.
	Scalability of the SoS hierarchy is an important open problem that is actively under study \cite{weisser2016bounded,Erdogdu2017} and is beyond the scope of our work.
	
\section{Preliminaries}
	\label{sec:prelim}
	This section introduces the notation used throughout the paper and formally defines the problem under analysis.
	
	Vectors and matrices are denoted by lowercase and uppercase bold faced letters respectively (e.g., $\va,\mA$), while scalars are in normal font weight (e.g., $a$).
	For a vector $\va$, and a matrix $\mA$, their entries are denoted by $\eva_i$ and $\emA_{i,j}$ respectively.
	Indexing starts at $1$, with $\mA_{i,:}$ and $\mA_{:,i}$ indicating the $i$-th row and $i$-th column of  $\mA$ respectively.
	The eigenvalues of a $n \times n$ matrix $\mA$ are denoted as $\lambda_{i}(\mA)$, where $\lambda_1$ and $\lambda_n$ correspond to the minimum and maximum eigenvalue respectively.
	Finally, the set of integers $\{1, \ldots, n\}$ is represented as $[n]$.
	
	\paragraph{Problem definition.} 
	We aim to predict a vector of $n$ node labels $\vy = (y_1, \dots, y_n)^\top$, where $y_i \in \{+1,-1\}$, from a set of observations $\mX$, where $\mX$ corresponds to noisy measurements of edges.
	These observations are assumed to be generated from a ground truth labeling $\vys$ by a generative process defined via an undirected connected graph $G = (V, E)$, where $V = [n]$, and an edge noise $p \in (0, 0.5)$.
	For each edge $(u,v) \in E$, we have a \textit{single} independent edge observation $X_{u,v} = \ys_u \ys_v$ with probability $1-p$, and $X_{u,v} = -\ys_u \ys_v$ with probability $p$.
	While for each edge $(u,v) \notin E$, the observation $X_{u,v}$ is always $0$.
	Thus, we have a \textit{known} undirected connected graph $G$, an \textit{unknown} ground truth label vector $\vys \in \{+1,-1\}^n$, noisy observations $\mX \in \{-1,0,+1\}^{n\times n}$.
	Given that we consider only edge observations, our goal is to understand when one can predict, in polynomial time and with high probability, a vector label $\vy \in \{-1,+1\}^n$ such that $\vy \in \{\vys,-\vys\}$.

	Given the aforementioned generative process, our focus will be to solve the following optimization problem, which stems from using maximum likelihood estimation \citep{globerson2015hard}:
	\begin{align}
		\quad \max_{\vy }  \quad &\vy^\top \mX \vy, \ \ \subto \ \ y_i = \pm 1,\ \forall i \in [n]. \label{eq:opt_dis}
	\end{align} 
	In general, the above combinatorial problem is NP-hard to compute, e.g., see results on grids by \citet{barahona1982computational}.
	Let $\vydis$  denote the optimizer of eq.\eqref{eq:opt_dis}.
	It is clear that for any label vector $\vy$, the negative label vector $-\vy$ attains the same objective value in eq.\eqref{eq:opt_dis}. 
	Thus, we say that one can achieve exact recovery by solving eq.\eqref{eq:opt_dis} if $\vydis \in \{\vys, -\vys\}$.
	Given the computational hardness of solving eq.\eqref{eq:opt_dis}, in the next subsections we will revise approaches that relax problem \eqref{eq:opt_dis} to one that can be solved in polynomial time.
	Then, our focus will be to understand the effects of the structural properties of the graph $G$ in achieving, with high probability, exact recovery in the continuous problem.
	
	\subsection{Semidefinite Programming Relaxation}
		A popular approach for approximating problem \eqref{eq:opt_dis} is to consider a larger search space that is simpler to describe and is convex.
		In particular, let $\mY = \vy \vy^\top$, that is, $Y_{i,j} = y_i y_j$ and noting that $\mY$ is a rank-1 positive semidefinite matrix.
		We can rewrite the objective of problem \eqref{eq:opt_dis} in matrix terms as follows, $\vy^\top \mX \vy = \Tr (\mX \vy \vy^\top) = \Tr (\mX \mY) = \Inner{\mX}{\mY}$.
		Thus, we have
		\begin{align}
			\max_{\mY}  \ \ \Inner{\mX}{\mY}, \ \
			\subto \ \ \mY \succeq 0,\ \ Y_{i,i} = 1, \forall i \in [n]. \label{eq:opt_sdp}
		\end{align}
		Let $\mYsdp$  denote the optimizer of the problem above, then, in this case, we say that exact recovery is realized by solving eq.\eqref{eq:opt_sdp} if $\mYsdp = \vys \vys^\top$. 
		The only constraint dropped in problem \eqref{eq:opt_sdp} with respect to problem \eqref{eq:opt_dis} is the rank-1 constraint, which makes problem \eqref{eq:opt_sdp} convex.
		The above relaxation is known as semidefinite programming (SDP) relaxation and is typically used as an approximation algorithm.
		That is, after obtaining a continuous solution $\mYsdp \in \Rnn$, a rounding procedure is performed to recover an approximate solution in $\{\pm 1\}^n$, e.g., see~\citep{goemans1995improved,nesterov1998semidefinite}.
		However, SDP relaxations have also been analyzed for exact inference, for instance, \cite{abbe2016exact} and \cite{amini2018semidefinite} studied exact recovery in the context of stochastic block models, while, \cite{bello2019exact,bello2020fairness} studied exact recovery in the context of structured prediction.
		
		In the next subsection, we will introduce tighter levels of relaxations known as the SoS hierarchy, and we will see that it turns out that SDP relaxations correspond to the first level of the SoS hierarchy.
		
	\subsection{Sum-of-Squares Hierarchy}
		We start this section by introducing additional notation for describing the SoS hierarchy.
		Let $[n]^{\leq d} = \{\emptyset\} \cup [n]^1 \cup \ldots \cup [n]^d$ denote the set of (possibly empty) tuples, of length up to $d$, composed of the integers from $1$ to $n$, e.g., $[2]^{\leq 2} = \{\emptyset, (1), (2), (1,1), (1,2), (2,1), (2,2)\}$.
		Also, let the summation between two tuples be the concatenation of all the elements in them, e.g., for $\gC_1 = (1,1,2), \gC_2 = (3,1)$ we have $\gC_1 + \gC_2 = (1,1,2,3,1)$.
		We use $\psi(\gC)$ to denote the tuple with elements from $\gC$ sorted in ascending order, e.g., for $\gC = (2,1,1,3)$ we have $\psi(\gC) = (1,1,2,3)$.
		We also use $\Abs{\gC}$ to denote the cardinality of $\gC$. 
		For two distinct tuples $\gC_1$ and $\gC_2$, the expression $\gC_1 < \gC_2$ means that either $\Abs{\gC_1} < \Abs{\gC_2}$, or $\Abs{\gC_1} = \Abs{\gC_2}$ and $\exists i$ such that the $i$-ith entry of $\gC_2$ is greater than the $i$-th entry of $\gC_1$.
		Then, for a set of tuples $\mathfrak{C} = \{\gC_1, \ldots, \gC_k\}$, we say that $\mathfrak{C}$ is in lexicographical order if $\gC_i < \gC_j$ for all $i < j$.		
		Finally, for a matrix $\mY \in \sR^{[n]^{\leq \ell} \times [n]^{\leq \ell}}$, we index its rows and columns by using tuples in $[n]^{\leq \ell}$ ordered lexicographically, e.g., for $\mY \in \sR^{[5]^{\leq 3}\times [5]^{\leq 3}}$ we have that $\mY_{(1,1,2), (5)}$ corresponds to the entry at row $(1,1,2)$ and column $(5)$.
		
		It is convenient to rewrite the objective of problem \eqref{eq:opt_dis} as a polynomial optimization problem, i.e., $\sum_i \sum_j X_{i,j} y_i y_j$, so that the standard machinery of SoS optimization \citep{lasserre2001global,parrilo2000structured,laurent2009sums} can be applied to formulate the degree-$d$ relaxation.
		Then, for an even number $d$, the degree-$d$ (or level $\nicefrac{d}{2}$) SoS relaxation of problem \eqref{eq:opt_dis} takes the form
		\begin{align}
			\max_{\mY \in \sR^{[n]^{\leq \frac{d}{2}} \times [n]^{\leq \frac{d}{2}}}} \   &\sum_{i=1}^{n} \sum_{j=1}^{n} X_{i,j} Y_{(i),(j)},\label{eq:opt_sos_degree_d} \\
			\subto \ 
			&\mY \succeq 0; 
			\ \ \mY_{(\emptyset)(\emptyset)} = 1;
			\ \ \mY_{(i)+\gC_1, (i) + \gC_2} = \mY_{\gC_1, \gC_2}, \ \forall i \in [n],\ \Abs{\gC_1}, \Abs{\gC_2} \leq \nicefrac{d}{2}-1; \notag \\
			&\mY_{\gC_1,\gC_2} = \mY_{\gC_1', \gC_2'}, \ \forall \psi(\gC_1 + \gC_2) = \psi(\gC_1' + \gC_2'), \ \Abs{\gC_1}, \Abs{\gC_2}, \Abs{\gC_1'}, \Abs{\gC_2'} \leq \nicefrac{d}{2}.  \notag
		\end{align}
		In the problem above, each entry of the matrix $\mY$ corresponds to a reparametrization that takes the form 
		$\mY_{\gC_1, \gC_2} = \prod_{i \in \gC_1} y_i \prod_{j \in \gC_2} y_j = \prod_{i \in \gC_1 + \gC_2} y_i,$
		which is also known as a pseudomoment matrix~\citep{lasserre2001global, laurent2009sums}.
		In problem \eqref{eq:opt_sos_degree_d}, the second constraint can be thought as a normalization constraint.
		The third list of constraints corresponds to 
		$ \prod_{j \in \gC}  y_j\cdot y_i^2 = \prod_{j \in \gC}  y_j, \forall  \Abs{\gC} \leq d - 2,$
		which is equivalent to $y_i = \pm 1$ in problem \eqref{eq:opt_dis}.
		Finally, the last list of constraints corresponds to 
		$ \prod_{j \in \gC_1 + \gC_2} y_j = \prod_{j \in \gC_1' + \gC_2'} y_j, 
		  \forall  \psi(\gC_1 + \gC_2) = \psi(\gC_1' + \gC_2'), \text{ and }  \Abs{\gC_1}, \Abs{\gC_2}, \Abs{\gC_1'}, \Abs{\gC_2'} \leq \nicefrac{d}{2},$
		which states that $\mY_{\gC_1, \gC_2}$ should be invariant to all permutations of the tuple $\gC_1 + \gC_2$.
		One can note that, for $d=2$, the degree-$2$ (or level $1$) SoS relaxation is equivalent to the SDP relaxation in eq.\eqref{eq:opt_sdp}.
		It is clear that for a larger $d$, the degree-$d$ SoS relaxation gives a tighter convex relaxation of problem \eqref{eq:opt_dis}.
		While one can solve problem \eqref{eq:opt_sos_degree_d} to a fixed accuracy using general-purpose SDP algorithms in polynomial time in $n$,
		the computational complexity will be of order $n^{\gO(d)}$.
		Thus, it is important that $d$ be of low order. 
		
		In the next section, we center our attention to the degree-$4$ SoS relaxation and in understanding how it can help improving the exact recovery rate with respect to the SDP (or degree-$2$ SoS) relaxation.
				
\section{On Exact Recovery from the Degree-4 SoS Hierarchy}
	As the focus of this section will be on the degree-4 SoS relaxation, we start by formulating the corresponding optimization problem.
	In problem \eqref{eq:opt_sos_degree_d}, for $d=4$, the matrix $\mY$ is in $\sR^{[n]^{\leq 2} \times [n]^{\leq 2}}$, that is, $\mY$ is a matrix of dimension $(1 + n + n^2) \times (1+n+n^2)$.
	\citet[Appendix A]{bandeira2018gramian} showed that one can write an equivalent formulation by using only the principal submatrix of $\mY$ indexed by $[n]^2 \times [n]^2$ (i.e., a matrix of dimension $n^2 \times n^2$).
	The reduced formulation takes the form:
	\begin{align}
		\max_{\mY \in \sR^{[n]^{2} \times [n]^{2}}} \ &\sum_{i=1}^{n} \sum_{j=1}^{n} X_{i,j} Y_{(1,1),(i,j)},\label{eq:opt_sos_degree_4} \\
		\subto \ &\mY \succeq 0;
		\ \ \mY_{(i,i)(j,j)} = 1,\ \forall i,j \in [n];
		\ \ \mY_{(i,i)(j,k)} = \mY_{(i',i')(j,k)},\ \forall  i,i',j,k \in [n]; \notag \\
		&\mY_{(i,j)(k,\ell)} = \mY_{(\pi_1,\pi_2)(\pi_3,\pi_4)}, \forall\ i,j,k,\ell \in [n],\ \pi \in \Pi(i,j,k,\ell),  \notag
	\end{align}
	where $\Pi(i,j,k,\ell)$ is the set of all permutations of $(i,j,k,\ell)$.
	We will go one step further in the reduction and show that one can indeed cast an equivalent formulation to problem \eqref{eq:opt_sos_degree_4} by using only the principal submatrix of $\mY \in \sR^{[n]^2 \times [n]^2}$ indexed by ${[n] \choose 2} \times {[n] \choose 2}$, i.e., a matrix of dimension $\frac{n(n-1)}{2} \times \frac{n(n-1)}{2}$.
	Here, it will be more convenient to use sets instead of tuples for indexing the rows and columns of $\mY$, where ${[n] \choose 2}$ denotes the set of all unordered combinations of length $2$ from the numbers in $[n]$, e.g., ${[3] \choose 2} = \{\{1, 2\}, \{1, 3\}, \{2, 3\}\}$. 
	For further distinction against the matrix $\mY \in \sR^{[n]^{2} \times [n]^{2}}$, we will use $\mYt$ to denote the matrix indexed by ${[n] \choose 2} \times {[n] \choose 2}$.
	
	We will also make use of the next set of definitions, which are important for stating our results.
	\begin{definition}[The level-2 vector]
	\label{def:level2-vector}
	For any vector $\vv \in \sR^n$, its level-2 vector, denoted by $\vvt \in \sR^{{n \choose 2}}$ and indexed by ${[n] \choose 2}$, is defined as $v^{(2)}_{\{i,j\}} = v_i v_j$.	
	\end{definition}
	We also define the level-2 version of a graph as follows.	
	\begin{definition}[The level-2 graph]
	\label{def:level2-graph}
		Let $G = (V,E)$, where $V=[n]$, be any undirected graph of $n$ nodes with adjacency matrix $\mA \in \{0,1\}^{n \times n}$. 
		The level-2 graph of $G$, denoted by $G^{(2)} = ({[n] \choose 2}, E^{(2)})$ and with adjacency matrix $\mAt \in \{0,1\}^{ {n \choose 2} \times {n \choose 2} }$, has its adjacency matrix defined as $\At_{\{i,k\},\{k,j\}} = 1$ if $(i,j) \in E$ for all $i < j < k\in [n]$, and $\At_{\{i,j\},\{k,\ell\}} = 0$ for all $i < j < k < \ell \in [n].$
	\end{definition}
	
	The next type of graphs have been studied for several years within the graph theory community and we will later show how they relate to the solution of the level-2 SoS relaxation.
	\begin{definition}[Johnson graph \citep{holton1993petersen}]
	\label{def:johnson-graph}
		For a set $[n]$, the Johnson graph $\gJ(n,k)$ has all the $k$-element subsets of $[n]$ as vertices, and two vertices are adjacent if and only if the intersection of the two vertices (subsets) contains $(k-1)$-elements.
	\end{definition}
	
	\begin{definition}[Kneser graph \citep{lovasz1978kneser}]
	\label{def:kneser-graph}
		For a set $[n]$, the Kneser graph $\gK(n,k)$ has all the $k$-element subsets of $[n]$ as vertices, and two vertices are adjacent if and only if the two vertices (subsets) are disjoint.
	\end{definition}
	
	From Definitions \ref{def:johnson-graph} and \ref{def:kneser-graph}, we are interested in $\gJ(n,2)$ and $\gK(n,2)$, where we first note that $\gK(n,2)$ is the complement of $\gJ(n,2)$.
	We also note that for a graph $G$ of $n$ nodes, by construction, the level-2 graph of $G$ is always a subgraph of the Johnson graph $\gJ(n,2)$, and is equal to $\gJ(n,2)$ if and only if $G$ is the complete graph of $n$ nodes.
	Finally, since our observation matrix $\mX$ depends on a graph $G$, one can also extend $\mX$ to a matrix in $\{-1, 0,+1\}^{{n \choose 2} \times {n \choose 2}}$.
	We will use $\mXt$ to denote the level-2 version of $\mX$.
	Specifically, $\Xt_{\{i,k\},\{k,j\}} = X_{i,j}$ for all $i < j < k \in [n]$, and $\Xt_{\{i,j\},\{k,\ell\}} = 0$ for all $i < j < k < \ell \in [n].$
	For further clarity, we illustrate the level-2 construction of $\mX$ in Figure \ref{fig:level2_example}, where the input graph is a 2 by 2 grid.
	\begin{figure}[!tb]
		\centering	
		\includegraphics[width=0.5\linewidth]{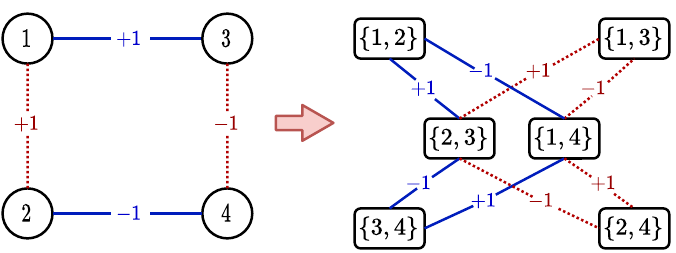}
		\caption{
			Illustration of the level-2 construction of $\mX$. 
			The edge values in the grid graph correspond to the observation $\mX$, while the edge values on the right graph correspond to level-2 matrix $\mXt.$
			The solid blue and dotted red lines indicate that the observation is correct and corrupted, respectively.
		}
		\label{fig:level2_example}
	\end{figure}	
	
	Next, we present an optimization problem that is equivalent to problem~\eqref{eq:opt_sos_degree_4} but in terms of the level-2 constructions defined above.
	For notational convenience, we will use $\mS^{+(ijk\ell)}_{-(stuv)}$ to denote a sparse \textit{symmetric} matrix such that the only non-zero entries are $\mS_{\{i,j\},\{k,\ell\}} = 1$, and $\mS_{\{s,t\},\{u,v\}} = -1$.
	\begin{align}
		\max_{\mYt \in \sR^{{[n] \choose 2} \times {[n] \choose 2}}} \ &\frac{1}{n-2} \Inner{\mXt}{\mYt}, \label{eq:opt_sos} \\
		\subto \
		&\mYt \succeq 0;
		\  \mYt_{\gC,\gC} = 1, \forall \gC \in {\textstyle {[n] \choose 2}};
		\  \Inner{\mS^{+(ikkj)}_{-(ik'k'j)}}{\mYt} = 0, \forall  i < j < k < k' \in [n]; \notag \\
		&\Inner{\mS^{+(ijk\ell)}_{-\pi(ijk\ell)}}{\mYt} = 0,  \forall i < j < k < l \in [n],\ \pi \in \Pi(i,j,k,l).  \notag
	\end{align}
	\begin{proposition}
	\label{prop:equivalent_form}
		Problem \eqref{eq:opt_sos} is equivalent to problem \eqref{eq:opt_sos_degree_4}.
	\end{proposition}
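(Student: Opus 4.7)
The plan is to exhibit an objective-preserving bijection between feasible points of \eqref{eq:opt_sos_degree_4} and \eqref{eq:opt_sos} by principal-submatrix restriction in one direction and canonical lifting in the other.

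For the forward direction, given a feasible $\mY$ for \eqref{eq:opt_sos_degree_4}, I define $\mYt$ as the principal submatrix of $\mY$ on rows and columns indexed by pairs $(i,j)$ with $i<j$, identified with $\{i,j\}\in {[n]\choose 2}$. Since a principal submatrix of a PSD matrix is PSD, $\mYt \succeq 0$. The diagonal condition $\mYt_{\{i,j\},\{i,j\}}=1$ follows from $\mY_{(i,j)(i,j)}=\mY_{(i,i)(j,j)}=1$, using permutation invariance (the tuples $(i,j,i,j)$ and $(i,i,j,j)$ share the multiset $\{i,i,j,j\}$) and normalization. The third constraint of \eqref{eq:opt_sos}, namely $\mYt_{\{i,k\},\{k,j\}}=\mYt_{\{i,k'\},\{k',j\}}$, follows by noting $\mY_{(i,k)(k,j)}=\mY_{(k,k)(i,j)}$ (permutation invariance, multiset $\{i,j,k,k\}$) and $\mY_{(k,k)(i,j)}=\mY_{(k',k')(i,j)}$ (the diagonal-tag-independence constraint); the fourth (partition-invariance) constraint transfers directly. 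For the objective, grouping the non-zero contributions of $\mXt$ by the shared element of the two $2$-subsets yields
\begin{align*}
\Inner{\mXt}{\mYt}=\sum_{a\neq b}\sum_{k\notin\{a,b\}}X_{a,b}\,\mYt_{\{a,k\},\{b,k\}},
\end{align*}
and by the previous argument $\mYt_{\{a,k\},\{b,k\}}$ is independent of $k$ and equals $\mY_{(1,1)(a,b)}$, so the inner sum contributes a factor $(n-2)$ that cancels the prefactor in \eqref{eq:opt_sos}.

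For the reverse direction, given feasible $\mYt$ I lift it to $\mY\in\sR^{[n]^2\times[n]^2}$ according to the pseudomoment interpretation $Y_{(i,j)(k,\ell)}=\prod_{r\in\{i,j,k,\ell\}}y_r$ modulo $y_r^2=1$: if all four indices are distinct set $\mY_{(i,j)(k,\ell)}=\mYt_{\{i,j\},\{k,\ell\}}$; if the indices of odd multiplicity reduce to two distinct $a,b$ set $\mY_{(i,j)(k,\ell)}=\mYt_{\{a,m\},\{b,m\}}$ for any $m\notin\{a,b\}$ (well-defined by the third constraint of \eqref{eq:opt_sos}); and if all multiplicities are even set $\mY_{(i,j)(k,\ell)}=1$. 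The normalization, diagonal-tag-independence, and permutation-invariance constraints of \eqref{eq:opt_sos_degree_4} then hold by construction, and the objective matches by the same computation as above.

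The main technical obstacle is verifying $\mY\succeq 0$. I plan to proceed via a Gram factorization: factor $\mYt=\mU\mU^\top$ with unit rows $\vu_{\{i,j\}}\in\sR^r$ and augment to $\tilde{\mU}$ (in a possibly enlarged ambient space) by setting $\tilde{\vu}_{(i,j)}=\vu_{\{i,j\}}$ for $i\neq j$ and $\tilde{\vu}_{(i,i)}=\vw$ for a common vector satisfying $\vw^\top\vu_{\{k,\ell\}}=\mYt_{\{k,m\},\{\ell,m\}}$ for all $k\neq\ell$ and $\|\vw\|=1$. The existence of such $\vw$ is the crux: the assignment $\vu_{\{k,\ell\}}\mapsto\mYt_{\{k,m\},\{\ell,m\}}$ must extend to a consistent linear functional on $\mathrm{span}\{\vu_{\{k,\ell\}}\}$, which I expect to follow from the third and fourth constraints of \eqref{eq:opt_sos} ensuring consistency across choices of $m$ and across partitions, together with the unit-norm diagonal condition providing the bound on the functional's norm needed so that extra coordinates can realize $\vw$ as a unit vector.
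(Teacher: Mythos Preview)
Your overall strategy coincides with the paper's: restrict to the principal submatrix on ${[n]\choose 2}$ in one direction, and canonically lift in the other. You are in fact more careful than the paper on the key point---you correctly isolate the positive semidefiniteness of the lifted $\mY$ as the main obstacle, whereas the paper's proof simply asserts ``one can verify that $\vv^\top\mY\vv\ge 0$ for any $\vv$ if $\mYt\succeq 0$'' without argument.

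The difficulty is that this step is genuinely nontrivial, and your hoped-for Gram argument cannot close it. Writing $w_{\{i,j\}}=v_{(i,j)}+v_{(j,i)}$, $s=\sum_i v_{(i,i)}$ and $b_{\{k,\ell\}}:=\Yt_{\{m,k\},\{m,\ell\}}$, the lifted quadratic form collapses to $\vw^\top\mYt\vw+2s\,\vb^\top\vw+s^2$, so $\mY\succeq 0$ is exactly the condition that the bordered matrix obtained by appending the row/column $(\vb,1)$ to $\mYt$ be PSD. The constraints of \eqref{eq:opt_sos} do \emph{not} force this. For $n=4$, take $b_{\{1,2\}}=b_{\{1,3\}}=0.7$, all other $b$'s equal to $0$, and $\Yt_{\{1,2\},\{3,4\}}=\Yt_{\{1,3\},\{2,4\}}=\Yt_{\{1,4\},\{2,3\}}=0$. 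This $\mYt$ is PSD (it splits into two $3\times 3$ blocks, each with determinant $1-2\cdot 0.49>0$) and satisfies every equality constraint of \eqref{eq:opt_sos}; yet on the $4\times 4$ principal submatrix of the bordered matrix indexed by $\{1,2\},\{1,3\},\{2,3\}$ and the appended row, the vector $(1,1,-1,-1)$ yields the value $4(1-2\cdot 0.7)<0$. Hence no unit Gram vector $\vw$ with the required inner products exists, and the canonical lift fails to be PSD. The paper's bare assertion therefore shares the same gap as your proposal. What does survive is the easy (restriction) direction, which shows that \eqref{eq:opt_sos} is a relaxation of \eqref{eq:opt_sos_degree_4} with the same objective value on common points; this one-sided statement is in fact all that the paper's subsequent dual-certificate analysis requires, since a dual certificate valid over the larger feasible set of \eqref{eq:opt_sos} a fortiori certifies optimality (and uniqueness) over the smaller one.
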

	All proofs are detailed in Appendix \ref{app:proofs}.
	\begin{remark}
		Let $\vyt$ be the level-2 vector of the \textit{ground-truth} labeling $\vys$, and let $\mYt^*$ be the optimizer of problem \eqref{eq:opt_sos}.
		Then, we say that exact recovery is realized if $\mYt^* = \vyt {\vyt}^\top$. 
	\end{remark}

	\subsection{The Dual Problem}
	\label{sec:dual_problem}
		A key ingredient for our analysis is the dual formulation of problem \eqref{eq:opt_sos}, which takes the following form
		\begin{align}
			\min_{\mVt,\ \vmu} \ &\Tr(\mVt), \label{eq:dual_problem}\\
			\subto \ &\mVt \mathrm{\ is\ diagonal}, \notag \\
			&\mLambdat = \mVt - \frac{\mXt}{n-2} + \sum_{i < j < k < k'} \hspace{-0.1in} \mu^{ikkj}_{ik'k'j} \mS^{+(ikkj)}_{-(ik'k'j)}  + \sum_{ \substack{i < j < k < \ell \\ \pi \in \Pi(i,j,k,\ell)}} \hspace{-0.1in} \mu^{ijk\ell}_{\pi(ijk\ell)} \mS^{+(ijk\ell)}_{-\pi(ijk\ell)} \succeq 0, \notag 
		\end{align}
		where $\mVt \in \sR^{{[n] \choose 2} \times {[n] \choose 2}}$, $\mu^{ikkj}_{ik'k'j} \in \sR$ and $\mu^{ijk\ell}_{\pi(ijk\ell)} \in \sR$ are the dual variables of the second constraint, and the third and fourth list of constraints from the primal formulation \eqref{eq:opt_sos}, respectively.
		The dual variable $\vmu$ denotes all the scalars $\mu^{ikkj}_{ik'k'j}$ and $\mu^{ijk\ell}_{\pi(ijk\ell)}$.
		
		We have that if there exists $\mYt, \mVt, \vmu$ that satisfy the Karush-Kuhn-Tucker (KKT) conditions \citep{boyd2004convex}, then $\mYt$ and $\mVt, \vmu$ are primal and dual optimal, and strong duality holds in this case.
		Let $\vyt$ be the level-2 vector of the \textit{ground-truth} labeling $\vys$.
		Since we are interested in exact recovery, we will consider the solution $\mYt = \vyt {\vyt}^\top$ for the rest of our analysis, where it is clear that such setting satisfies the primal constraints.
		Let
		\begin{align}
			\hspace{-0.05in} \mVt = \frac{\diag\big( \mXt \mYt \big)}{n-2} - \diag\Big(  \hspace{-0.08in} \sum_{i < j < k < k'} \hspace{-0.13in}\mu^{ikkj}_{ik'k'j} \mS^{+(ikkj)}_{-(ik'k'j)} \mYt\Big) - \diag\Big( \hspace{-0.1in} \sum_{ \substack{i < j < k < \ell \\ \pi \in \Pi(ijk\ell)}} \hspace{-0.13in} \mu^{ijk\ell}_{\pi(ijk\ell)} \mS^{+(ijk\ell)}_{-\pi(ijk\ell)} \mYt \Big), \label{eq:V_setting}
		\end{align}
		where, for a matrix $\mM$, $\diag(\mM)$ denotes the diagonal matrix formed from the diagonal entries of $\mM$.
		Complementary slackness and stationarity require the trace of $\mVt$ to be equal to the trace of the r.h.s.~of eq.\eqref{eq:V_setting}, which is clearly satisfied by construction.
		Thus, if we find an assignment of $\vmu$ such that $\mLambdat \succeq 0$, we would have an optimal solution since all KKT conditions are fulfilled.
		Nevertheless, we are also interested in $\mYt = \vyt {\vyt}^\top$ being the \textit{unique} optimal solution, where we note that having $\lambda_2(\mLambdat) > 0$ suffices to guarantee a unique solution.
		The argument follows from the fact that, by the setting of eq.\eqref{eq:V_setting}, we have $\mLambdat \vyt = 0$.
		Thus, if $\lambda_2(\mLambdat) > 0$ then $\vyt$ spans all of the null-space of $\mLambdat$.
		Combined with the KKT conditions, we have that $\mYt$ should be a multiple of $\vyt {\vyt}^\top$.
		Since $\mYt$ has diagonal entries equal to 1, we must have that $\mYt = \vyt {\vyt}^\top$.
		
		Putting all pieces together, we have that under eq.\eqref{eq:V_setting}, if for some $\vmu$ we have that $\mLambdat \succeq 0$ and $\lambda_2(\mLambdat) > 0$, then the optimizer of problem \eqref{eq:opt_sos} is $\vyt {\vyt}^\top$, i.e., we obtain exact recovery.
		Since $\vyt$ is an eigenvector of $\mLambdat$ with eigenvalue zero, we focus on controlling the quantity $\lambda_2(\mLambdat) = \min_{\vv \perp \vyt} \frac{\vv^\top \mLambdat \vv}{\vv^\top \vv}$.
		\footnote{This expression comes from the variational characterization of eigenvalues.}
		Also, as $\mLambdat$ depends on the noisy observation $\mXt$, we have that $\mLambdat$ is a random quantity. 
		Then, by using Weyl's theorem on eigenvalues, we have
		\begin{align}
			\lambda_2(\mLambdat) = \lambda_2(\mLambdat - \E[\mLambdat] + \E[\mLambdat]) \geq \lambda_2(\E[\mLambdat]) + \lambda_1(\mLambdat - \E[\mLambdat]). \label{eq:exp_plus_concentration}
		\end{align}
		In eq.\eqref{eq:exp_plus_concentration}, let $t$ be a lower bound to $\lambda_2(\E[\mLambdat])$, i.e., $\lambda_2(\E[\mLambdat]) \geq t$.
		Then, the second summand can be lower bounded by using matrix concentration inequalities.
		Specifically, by using matrix Bernstein inequality \citep{tropp2012user}, one can obtain that $\sP[\lambda_1(\mLambdat - \E[\mLambdat]) \leq -t] \leq \gO(n^2 \exp{-t})$.
		Thus, we can now focus on the first summand, which will be lower bounded by a novel Cheeger-type inequality.
		In the next subsections, we look at the expected value of $\mLambdat$ in more detail.
	
	\subsection{The Relation between \texorpdfstring{$\E[\mLambdat]$}{E[\backslash Lambda]} and the Algebraic Connectivity of the Level-2 Graph}
		In this section, we will show how $\E[\mLambdat]$ is related to the Laplacian matrix of $\Gt$ (the level-2 version of $G$).
		To do so, we will use the following definitions and notation.
		
		For a \textit{signed} weighted graph $H=(U,F)$, we use $\mW^H$ to  denote its weight matrix, that is, the entry $W^H_{i,j} \in \sR$ is the weight of edge $(i,j) \in F$ and is zero if $(i,j) \notin F.$
		For any set $T \subset U$, its boundary is defined as $\bound{T} = \{(i,j) \mid i \in T\ \tand\ j \notin T \}$; while its boundary weight is defined as $\boundW{T} = \sum_{i\in T, j\notin T} W^{H}_{i,j}$.
		The number of nodes in $T$ is denoted by $\abs{T}$.
		The degree of a node is defined as $\deg(i) = \sum_{j\neq i} W^H_{i,j}$.
		\begin{definition}
			Let $H$ be a graph with degree matrix $\mD^{H}$ and weight matrix $\mW^H$, where $\mD^H$ is a diagonal matrix such that $D_{i,i} = \deg(i)$.
			The Laplacian matrix of $H$ is defined as $\mL^H = \mD^H - \mW^H$.
		\end{definition}
		\begin{definition}[Cheeger constant \citep{cheeger1969lower}]
			For a graph $H = (U,F)$ of $n$ nodes, its Cheeger constant is defined as 
			$\phi(H)  = \min_{T\subset U,\ \abs{T} \leq \nicefrac{n}{2}} \frac{\boundW{T}}{\abs{T}}.$
		\end{definition}
		
		\begin{remark}
			For unweighted graphs, the definitions above match the standard definitions for node degree, boundary of a set, and Laplacian matrix.
		\end{remark}
		
		Next, we analyze the scenario where all the scalar dual variables in $\vmu$ are zero, we defer the case when they are not for the next subsection.
		
		\textbf{The $\vmu=\vzero$ scenario.}
		From eq.\eqref{eq:V_setting} we have that $\mVt = \diag( \mXt \mYt )/(n-2)$. 
		Hence, for all $i < j \in [n]$, we have $\E[\Vt_{\{i,j\},\{i,j\}}] = \nicefrac{(1-2p)}{(n-2)}\cdot \deg(\{i,j\})$.
		In addition, we have $\E[\Xt_{\{i,k\},\{k,j\}}] = (1-2p) \ \ys_i \ \ys_j \Ind{(i,j)\in E}$, for all $i < j < k \in [n]$.
		\footnote{
			Recall that if $(i,j) \in E$, then $X_{i,j} = -\ys_i \ys_j$ with probability $p$, and $X_{i,j} = \ys_i \ys_j$ otherwise.
			If $(i,j) \notin E$ then $X_{i,j} = 0$.
		}
		Finally, since $\vmu = \vzero$, we have	$\mLambdat = \mVt - \frac{\mXt}{n-2}$.
		Therefore, 
		\begin{align}
			\E[\mLambdat] = \frac{1-2p}{n-2} \widetilde{\mUpsilon} \mL^{\Gt} \widetilde{\mUpsilon}, \label{eq:upsilon}
		\end{align}
		where $\widetilde{\mUpsilon}$ is a diagonal matrix with entries equal to the entries in $\vyt$.
		Recall that $\yt_{\{i,j\}} = \ys_i \ys_j$ and $\ys_i \in \{\pm 1\}$ for all $i \in [n]$.
		Then, we have that $\widetilde{\mUpsilon}^{-1} = \widetilde{\mUpsilon}$ and, thus, the matrix $\E[\mLambdat]$ and $\frac{1-2p}{n-2} \mL^{\Gt}$ are similar.
		The latter means that both matrices share the same spectrum, i.e., 
		\begin{align}
			\lambda_2(\E[\mLambdat]) = \frac{1-2p}{n-2} \lambda_2(\mL^{\Gt}).\label{eq:similar_matrix}
		\end{align}
		Notice that the level-2 graph $\Gt$ is unweighted since $G$ is unweighted.
		That implies that one can lower bound $\lambda_2(\E[\mLambdat])$ by using existing lower bounds for the second eigenvalue\footnote{The second eigenvalue of the Laplacian matrix is also known as the algebraic connectivity.}  of the Laplacian matrix of $\Gt$.
		In particular, one can have \citep{mohar1991laplacian}
		\begin{align}
			\lambda_2(\E[\mLambdat]) = \frac{1-2p}{n-2} \lambda_2(\mL^{\Gt}) \geq \frac{(1-2p)\phi(\Gt)^2}{2(n-2)\deg_{\max}}. \label{eq:lb_mu_zero}
		\end{align}
		Finally, we note that considering $\vmu = \vzero$ is equivalent to not having the third and fourth list of constraints in problem \eqref{eq:opt_sos}.
		At this point, the reader might wonder if, setting $\vmu = \vzero$ and solving problem \eqref{eq:opt_sos} yields in any better chances of exact recovery than solving problem \eqref{eq:opt_sdp}.
		We answer the latter in the negative.
		\begin{proposition}
		\label{prop:sdp_sos_equal}
			 Without the third and fourth list of constraints, problem \eqref{eq:opt_sos} does not improve exact recoverability with respect to problem \eqref{eq:opt_sdp}.
		\end{proposition}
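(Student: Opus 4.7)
The plan is to specialize the dual analysis of Section~\ref{sec:dual_problem} to the case $\vmu = \vzero$, which is precisely the dual setting that remains after dropping the third and fourth lists of constraints from problem~\eqref{eq:opt_sos}, and then to compare the resulting spectral recovery condition with the analogous dual certificate for the SDP~\eqref{eq:opt_sdp}. Throughout, I fix $\mYt = \vyt\vyt^\top$ and take $\mVt$ as in~\eqref{eq:V_setting} with $\vmu = \vzero$, exactly as in the paper.

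First, the calculation already carried out in the text gives $\E[\mLambdat] = \tfrac{1-2p}{n-2}\,\widetilde{\mUpsilon}\,\mL^{\Gt}\,\widetilde{\mUpsilon}$ from~\eqref{eq:upsilon}, and the similarity relation~\eqref{eq:similar_matrix} yields $\lambda_2(\E[\mLambdat]) = \tfrac{1-2p}{n-2}\,\lambda_2(\mL^{\Gt})$. Repeating the same KKT-style construction for~\eqref{eq:opt_sdp} at the ground-truth $\mY = \vys\vys^\top$ produces an analogous slack matrix whose expectation is $(1-2p)\,\mUpsilon\,\mL^{G}\,\mUpsilon$ with second eigenvalue $(1-2p)\,\lambda_2(\mL^{G})$. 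Hence the whole comparison reduces to bounding $\lambda_2(\mL^{\Gt})$ against $(n-2)\,\lambda_2(\mL^{G})$.

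Second, I would establish $\lambda_2(\mL^{\Gt}) \leq \lambda_2(\mL^{G})$ by lifting a Fiedler vector of $G$. Let $\vv \in \sR^{n}$ be a unit eigenvector of $\mL^{G}$ for $\lambda_2(\mL^{G})$, so that $\vv \perp \vOne$, and define $\vu \in \sR^{{n \choose 2}}$ by $u_{\{i,j\}} = v_i + v_j$. Three short computations then suffice: $\vu \perp \vOne$ since $\sum_{\{i,j\}}(v_i+v_j) = (n-1)\,\vOne^\top \vv = 0$; expanding the square and using $\vv \perp \vOne$ gives $\|\vu\|^2 = (n-2)\,\|\vv\|^2$; and since each edge $(i,j) \in E$ contributes $n-2$ edges to $E^{(2)}$, one for each choice of shared vertex $k \notin \{i,j\}$, each with endpoint difference $(v_i+v_k)-(v_j+v_k) = v_i - v_j$, we obtain $\vu^\top \mL^{\Gt} \vu = (n-2)\,\vv^\top \mL^{G} \vv$. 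The variational characterization then yields $\lambda_2(\mL^{\Gt}) \leq \vu^\top \mL^{\Gt}\vu/\|\vu\|^2 = \lambda_2(\mL^{G})$.

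Combining these two steps gives $\lambda_2(\E[\mLambdat]) \leq \tfrac{1-2p}{n-2}\,\lambda_2(\mL^{G})$, so the expected spectral gap that governs exact recovery via~\eqref{eq:opt_sos} without the third and fourth constraints is smaller than the SDP's by a factor of $n-2$. Since the matrix-Bernstein term bounding $\lambda_1(\mLambdat - \E[\mLambdat])$ is computed over an ${n\choose 2}\times{n\choose 2}$ matrix and is therefore no better than, and in fact strictly worse than, the corresponding $n\times n$ bound used for~\eqref{eq:opt_sdp}, every sufficient condition for exact recovery derived from the $\vmu=\vzero$ certificate implies the corresponding one for the SDP. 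Hence dropping the third and fourth constraints cannot improve recoverability. The main obstacle is making the ``no improvement'' claim fully rigorous rather than a mere comparison of sufficient conditions: one must check that the lifted $\vu$ is nonzero whenever $\vv$ is (which is immediate, because $u_{\{i,j\}} = v_i + v_j$ vanishing for all pairs forces $\vv = \vzero$), and that the dual-certificate template actually characterizes recovery in both settings rather than providing mere sufficiency; both points are standard but need to be spelled out carefully to avoid a gap.
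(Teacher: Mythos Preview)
Your approach has a genuine gap, and it is precisely the one you flag at the end but then dismiss as ``standard.'' You compare the \emph{sufficient} conditions for exact recovery that arise from one particular dual-certificate construction (the KKT template with $\mVt$ as in~\eqref{eq:V_setting} and $\vmu=\vzero$) and show that the resulting spectral gap bound for~\eqref{eq:opt_sos} without the extra constraints is no better than the analogous bound for~\eqref{eq:opt_sdp}. But the proposition is a statement about the \emph{optimization problems themselves}: it asserts that the relaxed SoS program cannot recover $\vyt\vyt^\top$ in any instance where the SDP fails to recover $\vys\vys^\top$. A comparison of sufficient conditions coming from a specific certificate does not establish this. It remains entirely possible, under your argument, that for some realization of $\mX$ the reduced SoS problem has $\vyt\vyt^\top$ as its unique optimizer while the SDP does not, simply because neither certificate fires. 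The claim that the dual-certificate template ``characterizes recovery'' is not standard and is in fact false in general: these certificates are sufficient, not necessary.

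The paper's proof avoids all of this by working directly in the primal. It shows that problem~\eqref{eq:opt_sos_degree_4} with the third and fourth lists of constraints removed is \emph{equivalent} to the SDP~\eqref{eq:opt_sdp}: given any feasible $\mY^{\sdp}$ one builds a feasible $\mY^{\sos}$ by setting $Y^{\sos}_{(i,i)(j,k)} = Y^{\sdp}_{j,k}$ and $Y^{\sos}_{(i,j)(k,\ell)} = 0$, and conversely any feasible $\mY^{\sos}$ restricts to a feasible $\mY^{\sdp}$ on its first $n$ rows and columns, with both maps preserving the objective. Combined with Proposition~\ref{prop:equivalent_form}, this makes the two relaxations literally the same problem, so their recoverability coincides instance-by-instance. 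This is both shorter and strictly stronger than what your dual comparison can deliver; your spectral inequality $\lambda_2(\mL^{\Gt}) \le \lambda_2(\mL^{G})$ is correct and mildly interesting on its own, but it does not close the logical gap needed for the proposition.
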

		The purpose of Proposition \ref{prop:sdp_sos_equal} is to highlight the role that a $\vmu \neq \vzero$ will play in showing the improvement in exact recoverability of the degree-4 SoS relaxation with respect to the SDP relaxation, which is discussed next.

	\subsection{Connections to Systems of Sets and a Novel Cheeger-Type Lower Bound}
	\label{sec:johnson_kneser}
		We start this section by showing how the third and fourth list of constraints of problem \eqref{eq:opt_sos} relate to finding edge weights of the Johnson and Kneser graphs, respectively, so that the Laplacian matrix of a \textit{new} graph is positive semidefinite (PSD).
		
		Note that the third and fourth list of constraints in the SoS relaxation \eqref{eq:opt_sos} do not depend on the input graph, nor on the edge observations or the ground-truth node labels.
		Instead, they are constraints coming from the SoS relaxation, as explained in the subsequent paragraphs to problem \eqref{eq:opt_sos_degree_d}.
		That means that they depend only on the number of nodes, $n$, and on the degree of the relaxation, $d=4$.
		We will illustrate in detail the case of $n=4$ as it is easier to generalize from there to any value of $n$.
		
		Recall that $\mS^{+(ijk\ell)}_{-(stuv)}$ is a symmetric matrix that has non-zero entries $\mS_{\{i,j\},\{k,\ell\}} = 1$, and $\mS_{\{s,t\},\{u,v\}} = -1$.
		By taking advantage of the implicit symmetry constraint from $\mYt \succeq 0$, for $n=4$, one can realize that the third list of constraints in problem \eqref{eq:opt_sos} has six different constraints in total (with their respective dual variables), which are:
		\begin{align*}
			\mu^{1224}_{1334}: \Inner{\mS^{+(1224)}_{-(1334)}}{\mYt} = 0,\quad \mu^{2113}_{2443}: \Inner{\mS^{+(2113)}_{-(2443)}}{\mYt} = 0,\quad \mu^{1332}_{1442}: \Inner{\mS^{+(1332)}_{-(1442)}}{\mYt} = 0,\\
			\mu^{3114}_{3224}: \Inner{\mS^{+(3114)}_{-(3224)}}{\mYt} = 0,\quad \mu^{1223}_{1443}: \Inner{\mS^{+(1223)}_{-(1443)}}{\mYt} = 0,\quad \mu^{2334}_{2114}: \Inner{\mS^{+(2334)}_{-(2114)}}{\mYt} = 0.
		\end{align*}
		Similarly, from the fourth list of constraints we have:
		\begin{align*}
			\mu^{1324}_{1234}: \Inner{\mS^{+(1324)}_{-(1234)}}{\mYt} = 0,\quad \mu^{2314}_{1234}: \Inner{\mS^{+(2314)}_{-(1234)}}{\mYt} = 0.
		\end{align*}
		In the dual formulation \eqref{eq:dual_problem}, for both lists above, the matrices $\mS$ are weighted by the dual variables $\mu$.
		Then, the two weighted summations can be thought of as weight matrices of some graphs.
		Interestingly, such graphs happen to be the Johnson and Kneser graphs
		\footnote{For any $n$, whenever we write the Johnson and Kneser graphs, we refer to $\gJ(n,2)$ and $\gK(n,2)$, respectively.}
		 for the first and second list of constraints above, respectively.
		In Figure \ref{fig:johnson_kneser_example}, we show an illustration of the Johnson and Kneser graphs with edge weights corresponding to the dual variables.
		\begin{figure}[!tb]
			\centering	
			\includegraphics[width=0.5\linewidth]{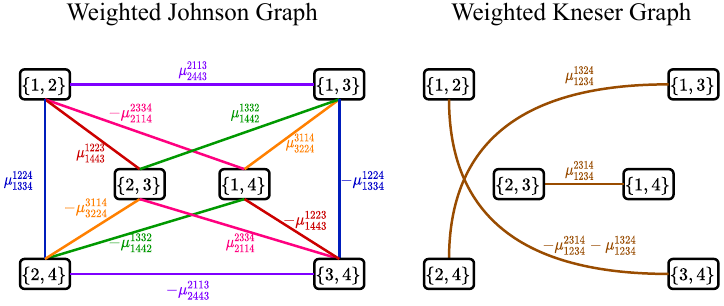}
			\caption{Johnson and Kneser graphs for $n=4$, where each edge weight is related to some dual variables from the SoS constraints.
			Edge weights with the same color sum to zero, see eq.\eqref{eq:kneser_constraints}.}
			\label{fig:johnson_kneser_example}
		\end{figure}
		
		Let $\oplus$ denote the symmetric difference of sets.
		Also, let $\mW^{\gJ}$ and $\mW^{\gK}$ denote the weight matrices of the Johnson and Kneser graphs, respectively.
		Then, for any $n$, the third and fourth list of constraints of problem \eqref{eq:opt_sos} translate to having the following constraints on $\mW^{\gJ}$ and $\mW^{\gK}$,
		\begin{align}
			\sum_{\substack{\gC_1,\gC_2\\ \gC_1 \oplus \gC_2 = \{i,j\}}} \hspace{-0.15in}\mW^{\gJ}_{\gC_1,\gC_2}= 0, \ \forall\ i < j \in [n], 
			\sum_{\substack{\gC_1,\gC_2\\ \gC_1 \oplus \gC_2 = \{i,j,k,\ell\}}} \hspace{-0.15in} \mW^{\gK}_{\gC_1,\gC_2}= 0,\  \forall\ i < j < k < \ell \in [n].\label{eq:kneser_constraints}
		\end{align}
		Thus, by using the construction in eq.\eqref{eq:V_setting}, we have that the PSD constraint of the dual formulation \eqref{eq:dual_problem} can be rewritten in terms of $\mW^{\gJ}$ and $\mW^{\gK}$ as follows,
		\begin{align}
			\mLambdat = \frac{\diag\big( \mXt \mYt \big)}{n-2} - \frac{\mXt}{n-2} + \left(\diag(\mW^{\gJ} \mYt) - \mW^{\gJ}\right) + \left(\diag(\mW^{\gK} \mYt) - \mW^{\gK}\right) \succeq 0. \notag
		\end{align}
		Let $\gGt = \Gt \cup \gJ \cup \gK$ such that $\mW^{\gGt} = \frac{1-2p}{n-2}\mW^{\Gt} + \mW^{\gJ} + \mW^{\gK}$, and noting that w.l.o.g. one can multiply the weights in eq.\eqref{eq:kneser_constraints} by $\ys_i \ys_j$ and $\ys_i\ys_j\ys_k\ys_\ell$, respectively.
		We can use a similar argument to that of eq.\eqref{eq:similar_matrix} and obtain
		\begin{align}
			\lambda_2(\E[\mLambdat]) = \lambda_2(\mL^{\gGt}). \label{eq:algconn_final}
		\end{align}
		The subtlety for lower bounding eq.\eqref{eq:algconn_final} is that, unless all edge weights are zero, the Johnson and Kneser graphs will both have at least one negative edge weight in order to fulfill  eq.\eqref{eq:kneser_constraints}.
		In other words, the Laplacian matrix $\mL^{\gGt}$ is no longer guaranteed to be PSD.
		That fact alone rules out almost all existing results on lower bounding the algebraic connectivity as it is mostly assumed that all edge weights are positive.
		Among the few works that study the Laplacian matrix with negative weights, one can find \cite{zelazo2014definiteness,chen2016definiteness}; however, their results focus on finding conditions for positive semidefiniteness of the Laplacian matrix in the context of electrical circuits and not in finding a lower bound.
		Our next result, generalizes the lower bound in \citep{mohar1991laplacian} by considering negative edge weights.
		\begin{theorem}
		\label{thrm:cheeger_ineq}
			Let $H = H^+ \cup H^-$ be a weighted graph such that $H^+$ and $H^-$ denote the disjoint subgraphs of $H$ with positive and negative weights, respectively.
			Also, let $\deg^{H^+}_{\max}$ denote the maximum node degree of $H^+$.
			Then, we have that
			\(
				\lambda_2(\mL^H) \geq \frac{\phi(H^{+})^2}{2 \deg^{H^+}_{\max}} + 2\cdot \mathrm{mincut}(H^{-}).
			\)
		\end{theorem}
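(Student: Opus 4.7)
My plan is to decompose the Laplacian additively as $\mL^H = \mL^{H^+} + \mL^{H^-}$ (this holds because $H^+$ and $H^-$ share the vertex set with disjoint edge supports, so both the weight and degree matrices split), and then bound each summand separately. Both $\mL^{H^+}$ and $\mL^{H^-}$ are symmetric with $\vone$ in their kernel, so Weyl's inequality gives
\begin{align*}
\lambda_2(\mL^H) \;\geq\; \lambda_2(\mL^{H^+}) + \lambda_1(\mL^{H^-}).
\end{align*}
The first summand involves only non-negative weights, so the classical weighted Cheeger-type bound (the same one invoked in eq.~\eqref{eq:lb_mu_zero}) yields $\lambda_2(\mL^{H^+}) \geq \phi(H^+)^2/(2\,\deg^{H^+}_{\max})$. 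Thus it remains to establish the novel piece $\lambda_1(\mL^{H^-}) \geq 2\,\mathrm{mincut}(H^-)$.

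For the negative part I flip signs: let $\widetilde{H}$ be the graph with non-negative weights $\mW^{\widetilde{H}} = -\mW^{H^-}$, so that $\mL^{H^-} = -\mL^{\widetilde{H}}$ and $\mathrm{mincut}(H^-) = -\mathrm{maxcut}(\widetilde{H})$. The required inequality then becomes the upper bound $\lambda_{\max}(\mL^{\widetilde{H}}) \leq 2\,\mathrm{maxcut}(\widetilde{H})$. Let $\vv$ be a unit top eigenvector of $\mL^{\widetilde{H}}$ and set $a = \max_i v_i$, $b = \min_i v_i$. Starting from $\vv^\top \mL^{\widetilde{H}} \vv = \sum_{\{i,j\}} W^{\widetilde{H}}_{ij}(v_i - v_j)^2$ and the pointwise inequality $(v_i - v_j)^2 \leq (a-b)\,|v_i - v_j|$, I invoke the co-area formula for the super-level sets $T(t) = \{i : v_i > t\}$; each edge $\{i,j\}$ contributes to $\partial T(t)$ for $t \in [\min(v_i,v_j), \max(v_i,v_j))$, giving
\begin{align*}
\sum_{\{i,j\}} W^{\widetilde{H}}_{ij}\,|v_i - v_j| \;=\; \int_b^a \boundW{T(t)}\, dt \;\leq\; (a-b)\cdot \mathrm{maxcut}(\widetilde{H}).
\end{align*}
Combining, $\lambda_{\max}(\mL^{\widetilde{H}}) \leq (a-b)^2 \cdot \mathrm{maxcut}(\widetilde{H})$. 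To close, we may assume $a > b$ (else the bound is trivial), in which case they are attained at distinct indices, so $a^2 + b^2 \leq \|\vv\|_2^2 = 1$, and therefore $(a-b)^2 \leq 2(a^2 + b^2) \leq 2$.

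The main obstacle is the upper bound $\lambda_{\max}(\mL^{\widetilde{H}}) \leq 2\,\mathrm{maxcut}(\widetilde{H})$, which does not appear to be an off-the-shelf spectral-graph fact and is the genuinely new ingredient. The key trick is to mirror the classical Cheeger argument (co-area on level sets) at the \emph{top} of the spectrum rather than the bottom; obtaining the sharp constant $2$, rather than the weaker $4$ one would get from $(a-b)^2 \leq 4\|\vv\|_\infty^2 \leq 4$, relies on the elementary observation that $a^2 + b^2 \leq \|\vv\|_2^2$ whenever the extremes of $\vv$ occur at distinct indices. Once this upper bound is in hand, the theorem follows immediately from Weyl's inequality and the standard positive-weight Cheeger bound.
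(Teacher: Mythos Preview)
Your proof is correct and takes a genuinely different, more modular route than the paper. The paper works throughout with a single vector~$\vv$---the Fiedler eigenvector of $\mL^H$ itself---and splits its Rayleigh quotient as $R_\mL(\vv)=R_\mL^+(\vv)+R_\mL^-(\vv)$; it then re-derives the Cheeger bound for $R_\mL^+(\vv)$ from scratch via a random-threshold sweep (after shifting $\vv$ so its entries are nonnegative), and bounds $R_\mL^-(\vv)$ by a second probabilistic sweep with density $2|t|$ on $[u_1,u_m]$ after rescaling so that $u_1^2+u_m^2=1$. You instead decouple at the eigenvalue level via Weyl, which lets you cite the standard positive-weight Cheeger bound wholesale and reduces the negative part to the clean standalone inequality $\lambda_{\max}(\mL^{\widetilde H})\le 2\,\mathrm{maxcut}(\widetilde H)$; your co-area argument for the latter is the deterministic counterpart of the paper's probabilistic sweep, and your observation $(a-b)^2\le 2(a^2+b^2)\le 2$ plays exactly the role of the paper's normalization $u_1^2+u_m^2=1$. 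In principle the paper's single-vector approach could couple the two cuts (they arise from level sets of the \emph{same} $\vv$), but since the paper ultimately minimizes the two terms independently, it lands on the identical bound---so your cleaner decomposition loses nothing.
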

		In Appendix \ref{app:discussion_lb}, we provide further discussion about Theorem \ref{thrm:cheeger_ineq}.
		In the case when there are positive weights only, the theorem above yields the typical Cheeger bound \citep{mohar1991laplacian}.
		When there is at least one negative weight, the bound shows an interesting trade-off between the Cheeger constant of the positive subgraph and the minimum cut of the negative subgraph.
		By applying Theorem \ref{thrm:cheeger_ineq} to eq.\eqref{eq:algconn_final}, we obtain
		\begin{align}
		\label{eq:final_lb}
			\lambda_2(\E[\mLambdat]) \geq \phi(\gGt^{+})^2 / (2 \deg^{\gGt^+}_{\max}) + 2\cdot \mathrm{mincut}(\gGt^{-}).
		\end{align}
		Without the weights of the Johnson and Kneser graphs, the lower bound above is equal to that of eq.\eqref{eq:lb_mu_zero}.
		Also, recall that, by construction, the edge set of the level-2 graph $\Gt$ is a subset of the edge set of the Johnson graph, and that the Kneser graph is the complement of the Johnson graph.
		That means that $\gGt$ will  be a complete graph of ${n \choose 2}$ vertices, where the edge weights of the Kneser graph are exclusively related to the dual variables $\mu$, while the edge weights of the Johnson graph might have an interaction between the noisy edge observations and the dual variables $\mu$.
		From the concentration argument stated after eq.\eqref{eq:exp_plus_concentration}, we conclude that as the lower bound in eq.\eqref{eq:final_lb} increases then the more likely to realize exact recovery.
		Next, for further clarity, we provide a detailed example of our analysis in this section.

	\section{Example}
\label{sec:example}
	The goal of this section is to provide a concrete example where the SoS relaxation \eqref{eq:opt_sos_degree_4} \textit{achieves} exact recovery but the SDP relaxation \eqref{eq:opt_sdp} \textit{does not}.
	Since for any input graph with $n$ vertices, its level-2 version has ${n \choose 2}$ vertices, we select a value of $n=5$ so that the level-2 graph has $10$ nodes and the plots can still be visually inspected in detail.  
	\begin{figure*}[!tb]
			\centering	
			\includegraphics[width=0.9\linewidth]{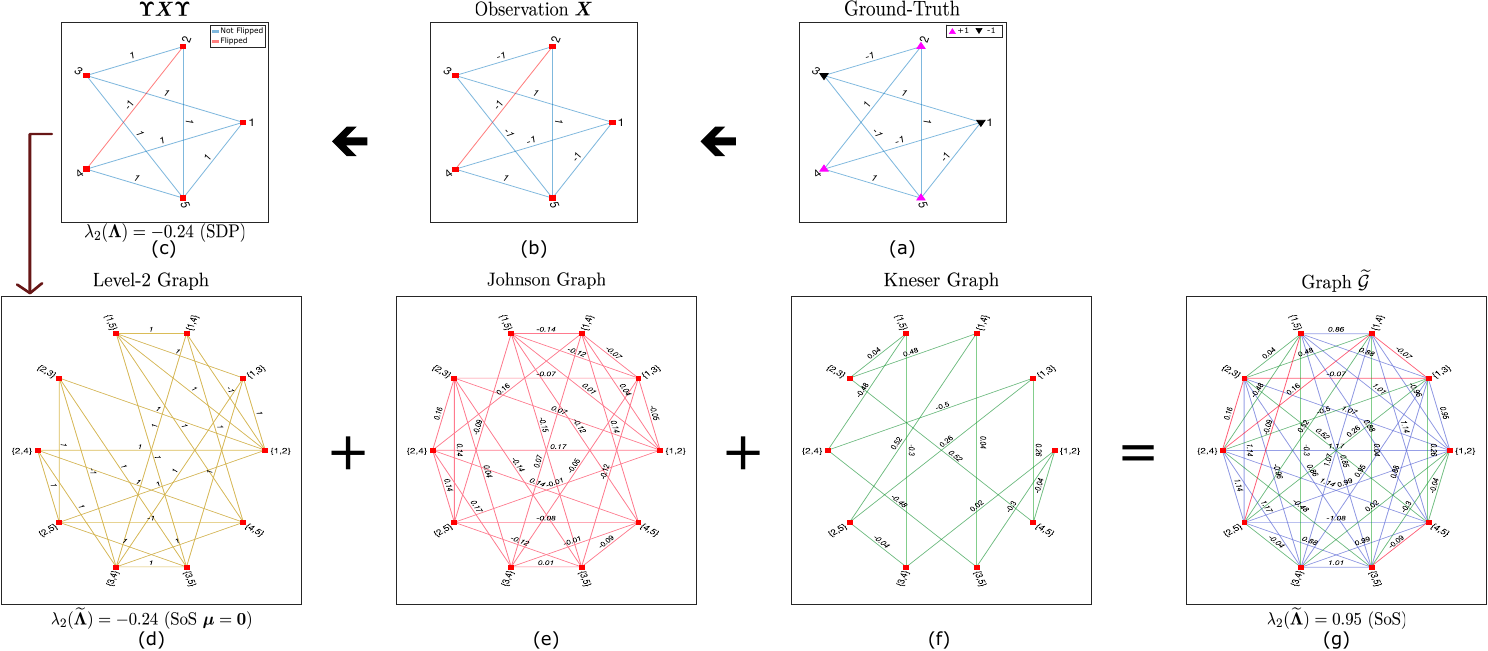}
			\caption{Detailed example of how the level-2 SoS relaxation results in improving the algebraic connectivity of the input graph through a combination of weights of its level-2 version, and the Johnson and Kneser graphs. 
			In the final graph $\gGt$, green and red lines indicate that their weights remain unchanged w.r.t. the Kneser and Johnson edge weights, respectively; while blue lines indicate that their weights resulted from the summation of weights from the Level-2 and Johnson graphs.}
			\label{fig:final_example}
		\end{figure*}	
	Figure (\ref{fig:final_example}a) shows the ground-truth labels of a graph with 5 nodes and 8 edges.
	Figure (\ref{fig:final_example}b) corresponds to the observation matrix $\mX$.
	In this case, only one edge is corrupted (the red edge).
	Figure (\ref{fig:final_example}c) shows the graph where an edge label of $-1$ or $1$ indicates whether the observed edge value was corrupted or not, respectively.
	The latter graph is obtained by $\mUpsilon \mX \mUpsilon$, where $\mUpsilon$ denotes a diagonal matrix with entries from $\vys$, similar to the procedure in eq.\eqref{eq:upsilon}. 
	Let $\mLambda$ be the dual variable of the PSD constraint in the SDP relaxation \eqref{eq:opt_sdp}.
	Then, under a similar dual construction to the one in \citep{bello2019exact,abbe2016exact}, we have that $\lambda_2(\mLambda) = \min_{\vv \perp \vys} \frac{\vv^\top \mLambda \vv}{\vv^\top \vv}$ is equal to the second eigenvalue of the Laplacian matrix of Figure (\ref{fig:final_example}c). 
	 Thus, we can observe that, for SDP, the ground-truth solution $\mYs$ attains a value of $\lambda_2(\mLambda) = -0.24 < 0$, hence, exact recovery fails.
	 
	In Figure (\ref{fig:final_example}d), we show the level-2 graph of $\mUpsilon \mX \mUpsilon$, i.e., $\widetilde{\mUpsilon} \mXt \widetilde{\mUpsilon}$.
	As argued by Proposition \ref{prop:sdp_sos_equal}, by setting $\vmu = \vzero$ the SoS does not do any better than SDP, which is verified by obtaining $\lambda_2(\mLambdat) = -0.24 < 0$, hence, exact recovery also fails in this case.
	However, by solving problem \eqref{eq:opt_sos},
%	\footnote{Any general purpose SDP solver can be used, e.g., CVX \citep{cvx}.}
	we obtain $\vmu \neq \vzero$ which, as discussed in Section \ref{sec:johnson_kneser}, relates to edge weights in the Johnson and Kneser graphs.
	Those edge weights are depicted in Figures (\ref{fig:final_example}e) and (\ref{fig:final_example}f), respectively.
	Finally, after summing all the weights of the level-2 graph, Johnson and Kneser graphs, we obtain a complete graph depicted in Figure (\ref{fig:final_example}g).
	In the latter, we have that $\lambda_2(\mLambdat) = 0.95 > 0$, which guarantees that $\mYt = \vyt {\vyt}^\top$, i.e., exact recovery succeeds.

	Motivated by eq.\eqref{eq:kneser_constraints} and Theorem \ref{thrm:cheeger_ineq}, in Appendix \ref{app:construction}, we show a non-trivial construction of the Kneser graph weights based on only the node degrees of the level-2 graph.

\section{Concluding Remarks}
	We studied the statistical problem of exact recovery in graphs over the boolean hypercube.
	We considered a generative model similar to that of \citep{globerson2015hard} and thoroughly analyzed the level-2 SoS relaxation \eqref{eq:opt_sos} of problem \eqref{eq:opt_dis}, motivated by empirical evidence on improvement in exact recoverability over the SDP relaxation \eqref{eq:opt_sdp}.
	We showed how the dual formulation of the SoS relaxation relates to finding edge weights for the Johnson and Kneser graphs so that the algebraic connectivity of the input graph increases.
	Finally, we characterized the improvement by deriving a novel lower bound for the algebraic connectivity of graphs with positive and negative weights, and provided a construction of the Kneser graph weights in Appendix \ref{app:construction}.
	It remains an interesting future work to study the recent mapping of degree-2 to degree-4 solutions in \cite{mohanty2020lifting} for exact recovery.

	\bibliographystyle{agsm}
	\bibliography{sos_inference}

	\appendix	
	\onecolumn
\def\toptitlebar{
	\hrule height4pt
	\vskip .25in}

\def\bottomtitlebar{
	\vskip .25in
	\hrule height1pt
	\vskip .25in}

\thispagestyle{empty}
\hsize\textwidth
\linewidth\hsize \toptitlebar {\centering
{\large\bf SUPPLEMENTARY MATERIAL \\ A Thorough View of Exact Inference in Graphs from the Degree-4 SoS Hierarchy \par}}
\vspace{-0.1in} \bottomtitlebar

\section{Detailed Proofs}
\label{app:proofs}

In this section, we state the proofs of all propositions and theorem. 

\subsection{Proof of Proposition \ref{prop:equivalent_form}}
\label{proof1}
By construction of the level-2 matrix $\mXt$, we have that each entry $X_{i,j}$ is repeated $n-2$ times.
Thus, it follows that the objectives in problems \eqref{eq:opt_sos_degree_4} and \eqref{eq:opt_sos} are equal.

Let $\mY$ be a feasible solution to problem \eqref{eq:opt_sos_degree_4}, then clearly the principal submatrix  indexed by ${[n] \choose 2} \times {[n] \choose 2}$ is a feasible solution to problem \eqref{eq:opt_sos}.
It remains to verify that if $\mYt$ is a feasible solution to problem \eqref{eq:opt_sos} then there exists a matrix $\mY$ such that it is feasible to problem \eqref{eq:opt_sos_degree_4} and has $\mYt$ as a principal submatrix.
We define the entries of $\mY$ as follows,
\begin{align}
	Y_{(i,i)(j,j)} = \Yt_{\{i,j\},\{i,j\}} \notag \\
	Y_{(i,i)(j,k)} = \Yt_{\{i,j\},\{i,k\}} \notag \\
	Y_{(i,j)(k,\ell)} = \Yt_{\{i,j\},\{k,\ell\}}. \notag
\end{align}
Clearly, $\mY$ will fulfill the constraints of problem \eqref{eq:opt_sos_degree_4} if $\mYt$ is feasible to problem \eqref{eq:opt_sos}. 
In particular, one can verify that $\vv^\top \mY \vv \geq 0$ for any $\vv$ if $\mYt \succeq 0$, which concludes our proof.

\subsection{Proof of Proposition \ref{prop:sdp_sos_equal}}
We will show the equivalence between problem \eqref{eq:opt_sos_degree_4}, \textit{without the third and fourth list of constraints}, and problem \eqref{eq:opt_sdp}.
Then, by Proposition \ref{prop:equivalent_form}, our claim follows.

The proof is similar to that of Section \ref{proof1}, where it is clear that the objectives in problems \eqref{eq:opt_sdp} and \eqref{eq:opt_sos_degree_4} are equal.
Let $\mY^\sdp$ be a feasible solution to problem \eqref{eq:opt_sdp}, then we define $\mY^\sos$ as follows,
\begin{align*}
	Y^\sos_{(i,i)(j,k)} &= Y^\sdp_{i,j} \\
	Y^\sos_{(i,j)(k,\ell)} &= 0.
\end{align*}
Since $\mY^\sdp \succeq 0$, it follows that $\mY^\sos \succeq 0$ and, thus, $\mY^\sos$ is feasible to problem \eqref{eq:opt_sos_degree_4} without the third and fourth list of constraints.
Similarly, in the other direction, let $\mY^\sos$ be a feasible solution to problem \eqref{eq:opt_sos_degree_4} without the third and fourth list of constraints, and define $\mY^\sdp$ to be the principal submatrix of $\mY^\sos$ with the first $n$ rows and columns. 
Then, it follows that if $\mY^\sos \succeq 0$ then $\mY^\sdp \succeq 0$, which is feasible to problem \eqref{eq:opt_sdp}.

\subsection{Proof of Theorem \ref{thrm:cheeger_ineq}}

For simplicity, let $\mW$ and $\mL$ be the weight matrix and Laplacian matrix of an undirected connected graph $H$ of $m$ nodes.
Also, let $\mW^+$ and $\mW^-$ be the weight matrices of $H^+$ and $H^-$.
For a matrix $\mM$ and vector $\vv$, we use $R_\mM(\vv)$ to denote their Rayleigh quotient, i.e., $R_\mM(\vv) = \frac{\vv^\top \mM \vv}{\vv^\top \vv}$.
It follows that $R_\mL(\vv) := \frac{\vv^\top \mL \vv}{\vv^\top \vv} = \frac{\sum_{i<j} W_{i,j} (v_i - v_j )^2}{\vv^\top \vv} $, and $\lambda_2(\mL) = \min_{\vv \perp \vone} R_\mL(\vv)$. 
Similarly, we define $R_\mL^+ (\vv) := \frac{\sum_{i<j} W_{i,j}^+ (v_i - v_j )^2}{\vv^\top \vv}$, $R_\mL^- (\vv) := \frac{\sum_{i<j} W_{i,j}^- (v_i - v_j )^2}{\vv^\top \vv}$. 
Note that $R_\mL(\vv) = R_\mL^+ (\vv) + R_\mL^- (\vv)$.
Next, we state a lemma that will be of use for the proof of Theorem \ref{thrm:cheeger_ineq}.
\begin{lemma}
\label{lemma:RLshifting}
	Let $\mL$ be a Laplacian matrix of dimension $m \times m$.
	Let also $\vone$ denote a vector of ones. 
	Then, for any $\delta \in \sR, \vv \in\sR^m, \sum_i v_i \geq 0$, it follows that
	\[
		R_{\mL}^+ (\vv)  \geq  R_{\mL}^+ ( \vv + \delta \vone) \,.
	\]
\end{lemma}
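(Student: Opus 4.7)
The plan hinges on the Laplacian identity $\mL^+ \vone = \vzero$, which lets the shift $\vv \mapsto \vv + \delta\vone$ pass through $\mL^+$ without affecting the numerator of $R_\mL^+$. First I would expand
\[
(\vv + \delta\vone)^\top \mL^+ (\vv + \delta\vone) \;=\; \vv^\top \mL^+ \vv \;+\; 2\delta\,\vone^\top \mL^+ \vv \;+\; \delta^2\,\vone^\top \mL^+ \vone,
\]
and note that both cross-terms vanish, since $\vone^\top \mL^+ = \vzero^\top$ (row sums of any Laplacian, in particular $\mL^+$, equal zero). Hence $(\vv + \delta\vone)^\top \mL^+ (\vv + \delta\vone) = \vv^\top \mL^+ \vv =: N \geq 0$. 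The case $N=0$ makes both sides of the lemma vanish, so I assume $N>0$.

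With the numerator pinned at $N$, the inequality $R_\mL^+(\vv) \geq R_\mL^+(\vv + \delta\vone)$ is equivalent to $\|\vv + \delta\vone\|^2 \geq \|\vv\|^2$. A direct expansion yields
\[
\|\vv + \delta\vone\|^2 - \|\vv\|^2 \;=\; 2\delta\,(\vone^\top\vv) + m\delta^2 \;=\; \delta\,\bigl(m\delta + 2\,\vone^\top\vv\bigr),
\]
so the lemma reduces to nonnegativity of this quadratic in $\delta$ under the sole hypothesis $\vone^\top\vv = \sum_i v_i \geq 0$.

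The sign analysis is where the argument becomes delicate, and it is the main obstacle. When $\delta \geq 0$, the factor $\delta$ is nonnegative, and $m\delta + 2\,\vone^\top\vv \geq 0$ by the hypothesis, so the product is nonnegative and the lemma follows at once. For $\delta < 0$, however, the first factor is strictly negative, so one needs $m\delta + 2\,\vone^\top\vv \leq 0$, i.e., $\delta \leq -2\,\vone^\top\vv/m$. The quadratic attains its minimum at $\delta^\star = -\vone^\top\vv/m$ with value $-(\vone^\top\vv)^2/m$, so the hypothesis alone is insufficient in the intermediate regime $-2\,\vone^\top\vv/m < \delta < 0$.

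To close this gap I would trace back to how the lemma is invoked inside Theorem~\ref{thrm:cheeger_ineq}: the shift $\delta$ is not arbitrary but is selected to recenter $\vv$ (typically so that a desired coordinate of $\vv + \delta\vone$ becomes zero, or so that $\vv + \delta\vone$ becomes orthogonal to $\vone$), and this choice forces $\delta$ to land in one of the two admissible regimes identified above. A second avenue I would try, if the parent application allows, is the sign-flip $\vv \leftarrow -\vv$ combined with $\delta \leftarrow -\delta$ (which leaves $R_\mL^+$ invariant) to reduce every case to $\delta \geq 0$. Either route ties the permissible range of $\delta$ back to the hypothesis $\sum_i v_i \geq 0$, and this reduction — rather than the quadratic identity above — is where the real work of the proof will lie.
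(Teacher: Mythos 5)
Your decomposition is the same as the paper's: the numerator of $R_{\mL}^+$ is invariant under the shift because each difference $(v_i+\delta)-(v_j+\delta)=v_i-v_j$ is unchanged, so the lemma reduces to comparing denominators, i.e., to the nonnegativity of $\delta\,(m\delta+2\sum_i v_i)$. Where you differ is that you carry out the sign analysis honestly, and your conclusion is correct: the hypothesis $\sum_i v_i\ge 0$ alone does \emph{not} make this quadratic nonnegative in the regime $-2(\sum_i v_i)/m<\delta<0$, and the lemma as stated is actually false there. Concretely, take $m=2$ with a single positive edge of weight $1$, $\vv=(1,0)^\top$ and $\delta=-\nicefrac{1}{2}$: then $\sum_i v_i=1\ge 0$ but $R_{\mL}^+(\vv)=1$ while $R_{\mL}^+(\vv+\delta\vone)=2$. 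The paper's own proof contains exactly the gap you isolated: at its step (a) it drops the term $2\delta\sum_i v_i$ from the denominator, justifying this ``by the fact that $\sum_i v_i\ge 0$,'' which is only legitimate when additionally $\delta\ge 0$ (or $\sum_i v_i=0$); no further argument is given for negative $\delta$.

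Your proposed repair is also the right one. In the proof of Theorem \ref{thrm:cheeger_ineq} the lemma is applied to the second eigenvector $\vv$ of $\mL$, which satisfies $\vone^\top\vv=0$; the cross term then vanishes identically, $\|\vv+\delta\vone\|^2=\|\vv\|^2+m\delta^2\ge\|\vv\|^2$ for every $\delta\in\sR$, and the inequality holds regardless of the sign of the shift. So the statement should be corrected to assume either $\delta\,(\sum_i v_i)\ge 0$ or, as it is actually used, $\sum_i v_i=0$; under either hypothesis your quadratic identity already closes the proof, and in the second case no trace-back to the particular value of $\delta$ is even necessary. In short, your attempt does not miss an idea the paper has --- it correctly detects that the stated hypotheses are too weak, identifies the exact failing regime, and supplies the fix that makes the downstream application of the lemma sound.
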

\begin{proof}
	Starting from the right-hand side, we have
	\begin{align*}
	R_{\mL}^+ ( \vv + \delta \vone)
	&= \frac{\sum_{i<j} W_{i,j}^+ \left(( v_i + \delta)-( v_j + \delta)\right)^2}{\sum_{i}  \left( v_i + \delta \right)^2} \\
	&= \frac{ \sum_{i<j} W_{i,j}^+ \left(v_i - v_j \right)^2}{\sum_{i}  \left( v_i + \delta \right)^2} \\
	&= \frac{ \sum_{i<j} W_{i,j}^+ \left(v_i - v_j \right)^2}{\sum_{i}  \left(  v_i^2 + \delta^2 + 2\delta v_i \right)} \\
	&= \frac{ \sum_{i<j} W_{i,j}^+ \left(v_i - v_j \right)^2}{ \sum_{i} v_i^2 + m\delta^2 + 2\delta \sum_i v_i } \\
	&\overset{(a)}{\leq} \frac{ \sum_{i<j} W_{i,j}^+ \left(v_i - v_j \right)^2}{ \sum_{i} v_i^2 + m\delta^2} \\
	&\leq \frac{ \sum_{i<j} W_{i,j}^+ \left(v_i - v_j \right)^2}{ \sum_{i} v_i^2} \\
	&= R_{\mL}^+ (\vv),
	\end{align*}
	where (a) holds by the fact that $\sum_i v_i \geq 0$.
\end{proof}
We now present the proof of Theorem \ref{thrm:cheeger_ineq}.
\begin{proof}
	Let $\vv$ be the eigenvector related to the eigenvalue $\lambda_2(\mL)$. 
	Without loss of generality, we assume $\norm{\vv} = 1$ and $v_1 \leq v_2 \leq \dots \leq v_m$. 
	Recall that $\vone^\top \vv = 0$.
	Then, we have that
	\[
	\lambda_2(\mL) = R_\mL(\vv) = R_\mL^+(\vv) + R_\mL^-(\vv). %\geq R_\mL^+(\vu) + R_\mL^-(\vu+\delta \vone).
	\]
	
	\paragraph{Lower bounding $R_\mL^+(\vv)$.}
	Set $\delta = v_1$ and denote $\vu = \vv - \delta\vone$.
	Then, we have that $0 = u_1 \leq \dots \leq u_m$. 
	Also note that $\delta^2 \leq 1$. 
	Then, by Lemma \ref{lemma:RLshifting}, it follows that $R_\mL^+(\vv) \geq R_\mL^+(\vu)$.
	
	We now define a random variable $t$ on the support $[0, u_m]$, with probability density function $f(t) = \frac{2}{u_m^2}t$. One can verify that $\int_{t = 0}^{u_m} \frac{2}{u_m^2}t \ dt = 1$, thus $f(t)$ is a valid probability density function.
	Then, for any interval $[a,b]$, it follows that the probability of $t$ falling in the interval is 
	\[
	\sP[a \leq t \leq b] = \int_{t=a}^b \frac{2}{u_m^2}t \ dt = \frac{1}{u_m^2} (b^2 - a^2).
	\]
	Next, for some $t$, construct a random set $S_t = \{i \mid u_i \geq t\}$. 
	Let $\omega^+ (\partial S_t) = \sum_{i\in S_t, j\notin S_t} W_{i,j}^+$.
	
	It follows that 
	\begin{align*}
	\E[w^+ (\partial S_t)]
		&= \E[\sum_{i\in S_t, j\notin S_t} W_{i,j}^+] \\
		&= \sum_{i<j} \sP[u_j \leq t \leq u_i] W_{i,j}^+ \\
		&= \frac{1}{u_m^2} \sum_{i<j} (u_i - u_j) (u_i + u_j)   W_{i,j}^+ \\
		&\leq \frac{1}{u_m^2} \sqrt{\sum_{i<j} (u_i-u_j)^2 W_{i,j}^+} 
		\sqrt{\sum_{i<j}(u_i + u_j)^2 W_{i,j}^+} \\
		&= \frac{1}{u_m^2}\sqrt{R_\mL^+(\vu) \sum_i u_i^2} 
		\sqrt{\sum_{i<j}(u_i + u_j)^2 W_{i,j}^+} \\
		&\leq \frac{1}{u_m^2}\sqrt{R_\mL^+(\vu) \sum_i u_i^2} 
		 \sqrt{2\sum_{i} u_i^2 \deg^{H^+}(i)}  \\
		 &\leq \frac{1}{u_m^2}\sqrt{R_\mL^+(\vu) \sum_i u_i^2} 
		 \sqrt{2\deg^{H^+}_{\max} \sum_{i} u_i^2}  \\
		&= \frac{\sum_{i} u_i^2}{u_m^2} \sqrt{2 \deg^{H^+}_{\max} R_\mL^+(\vu)}.
	\end{align*}
	Also note that 
	$\E\left[\AAbs{S_t}\right]
	= \sum_i \sP[u_i \geq t] 
	= \sum_i \frac{u_i^2}{u_m^2}.$
	As a result we obtain 
	$$
	\E [\omega^+ (\partial S_t)]
		\leq 
		\E \left[ \AAbs{S_t} \right] \sqrt{2 \deg^{H^+}_{\max} R_\mL^+(\vu)}.
	$$
	Thus, we have $\E \left[\omega^+ (\partial S_t) - \AAbs{S_t} \sqrt{2 \deg^{H^+}_{\max} R_\mL^+(\vu)} \right] \leq 0.$
	This implies that $\exists S_t$ such that $\omega^+ (\partial S_t) - \AAbs{S_t} \sqrt{2 \deg^{H^+}_{\max} R_\mL^+(\vu)} \leq 0$. 
	Rearranging we have, 
	\begin{align}
	\label{eq:RLplus}
		 R_\mL^+(\vv) \geq R_\mL^+(\vu) \geq \frac{\omega^+(\partial S_t)^2}{2\deg^{H^+}_{\max} \AAbs{S_t}^2}
	\end{align}

	\paragraph{Lower bounding $R_\mL^-(\vv)$.}
	Set $\alpha = \sqrt{\frac{1}{v_1^2 + v_m^2}}$ and denote $\vu = \alpha\vv$.
	Then, we have that $u_1^2 + u_m^2 = 1$. 
	Note also that $R_\mL^-(\vv) = R_\mL^-(\vu)$.
	
	We now define a random variable $t$ on the support $[u_1, u_m]$, with probability density function $f(t) = 2 |t|$. 
	One can verify that $\int_{t = u_1}^{u_m} 2|t| \ dt = 1$, thus $f(t)$ is a valid probability density function.
	Then, for any interval $[a,b]$, it follows that the probability of $t$ falling in the interval is 
	\[
	\sP[a \leq t \leq b] = \int_{a}^b 2|t| \ dt = b^2 \sign(b) - a^2 \sign(a).
	\]
	Since $[u_1,u_m] \subset [-1,1]$, one can verify that $(a-b)^2/2  \leq \sP[a \leq t \leq b]$.
	Let $\omega^- (\partial S_t) = \sum_{i\in S_t, j\notin S_t} W_{i,j}^-$.
	For some $t$, construct a random set $S_t = \{i \mid u_i \leq t\}$. 
	It follows that 
	
	\begin{align}
	\E[\omega^- (\partial S_t)]
	&= \E[\sum_{i\in S_t, j\notin S_t} W_{i,j}^-] \nonumber\\
	&= \sum_{i<j} \sP[u_i \leq t \leq u_j] W_{i,j}^- \nonumber\\
	&\leq \frac{1}{2} \sum_{i<j} (u_i - u_j)^2    W_{i,j}^- \nonumber\\
	&= \frac{1}{2} R_\mL^-(\vu) \sum_i u_i^2 \nonumber\\
	&\leq \frac{1}{2} R_\mL^-(\vu),	\notag
	\end{align}
	where the last inequality follows from having $\sum_i u_i^2 \geq 1$ and $R_\mL^-(\vu) \leq 0$.
	Thus, we have $\E[\omega^- (\partial S_t) - \frac{1}{2} R_\mL^-(\vu)] \leq 0$.
	This implies that $\exists S_t$ such that $\omega^- (\partial S_t) - \frac{1}{2} R_\mL^-(\vu) \leq 0$.
	Rearranging we have,
	\begin{align}
	\label{eq:RLminus}
		 R_\mL^-(\vv) = R_\mL^-(\vu) \geq 2 \omega^-(\partial S_t).
	\end{align}
	By minimizing \eqref{eq:RLplus} and \eqref{eq:RLminus} independently, and combining them, we conclude our proof.
\end{proof}

\newpage
\section{Further Discussion on Theorem \ref{thrm:cheeger_ineq}}
\label{app:discussion_lb}

We remark that the reason we do not consider other versions of the Laplacian matrix (e.g., the normalized Laplacian matrix which is guaranteed to be PSD even in the presence of negative weights) is because how our primal/dual construction (see Section \ref{sec:dual_problem}) leads to a valid solution of the constraints in eq.\eqref{eq:dual_problem} and that also satisfies the KKT conditions.
That is, using other notions of Laplacian matrix (see e.g., \cite{kunegis2010spectral,mercado2016clustering,cucuringu2019sponge,chiang2012scalable,knyazev2017signed,atay2020cheeger}) would \textbf{not} satisfy the optimality conditions needed for exact recovery (in particular, stationarity and complementary slackness).
In fact, one of the challenges we face in our analysis is that by having the standard Laplacian matrix, its minimum eigenvalue can be negative, as shown in our example in Section \ref{sec:example} and also discussed in \cite{knyazev2017signed}, which motivated the search of a more general lower bound for the algebraic connectivity of signed graphs (Theorem \ref{thrm:cheeger_ineq}).

We also highlight that Theorem \ref{thrm:cheeger_ineq} sheds light on the subtle trade-off between the Cheeger constant of the positive subgraph and the minimum cut of the negative subgraph.
That is, intuitively, the SoS solution will try to find negative weights for the Johnson and Kneser graphs of as low magnitude as possible, so that the minimum-cut of the negative subgraph does not make the algebraic connectivity negative.

\section{A Degree-Based Construction of the Kneser Graph}
\label{app:construction}

	In Section \ref{sec:example}, we used CVX \citep{cvx} to solve problem \eqref{eq:opt_sos} and, thus, obtain the dual variables $\vmu$ in problem \eqref{eq:dual_problem} from which we construct the weights of the Johnson and Kneser graphs.
	Motivated by the trade-off between Cheeger constants of the positive and negative subgraphs, shown in Theorem \ref{thrm:cheeger_ineq}, we show a simple non-trivial way (not necessarily optimal) to directly construct the weights of the Kneser graph.
	The reason why we focus in the Kneser graph weights is because the fourth list of constraints in problem \eqref{eq:opt_sos} can be expressed by two constraints for any $i < j < k < \ell$, as noted in Section \ref{sec:johnson_kneser}.
	The latter fact implies that, for any $i < j < k < \ell$, the edge weights $W^\gK_{\{i,j\},\{k,\ell\}}$, $W^\gK_{\{i,k\},\{j,\ell\}}$, and $W^\gK_{\{i,\ell\},\{j,k\}}$ need to sum to zero in order to fulfill the SoS constraints.
	As also noted in Section \ref{sec:johnson_kneser}, at least one of the previous weights need to be negative unless all three are zero.
	With these considerations, we present our construction in Algorithm \ref{algo:construction}, which relies only on the node degrees and a constant real value.
	\renewcommand{\algorithmicrequire}{\textbf{Input:}}
	\renewcommand{\algorithmicensure}{\textbf{Output:}}
	\begin{algorithm}[!t]
	\caption{A construction of Kneser graph weights}
	\label{algo:construction}
		\begin{algorithmic}[1]
			\REQUIRE Level-2 weight matrix $\mM = \mUpsilont \mXt \mUpsilont$, constant $c \in \sR$.
			\STATE $\deg(\gC_1) \gets \sum_{\gC_2} M_{\gC_1, \gC_2}, \ \forall \gC_1 \in {[n] \choose 2}$
			\STATE Initialize $\mW^\gK$ as a zero matrix
 			\FORALL{$i < j < k < \ell \in [n]$}
				\STATE Assign the following such that $\psi_1 \geq \psi_2 \geq \psi_3$
				\STATE $\psi_1 \gets \deg(\{i,j\}) + \deg(\{k,\ell\})$
				\STATE $\psi_2 \gets \deg(\{i,k\}) + \deg(\{j,\ell\})$
				\STATE $\psi_3 \gets \deg(\{i,\ell\}) + \deg(\{j,k\})$
				\IF{$\psi_1 = \psi_2 = \psi_3$}
					\STATE $W^\gK_{\{i,j\},\{k,\ell\}} \gets 0,\ \ W^\gK_{\{i,k\},\{j,\ell\}} \gets 0, \ \ W^\gK_{\{i,\ell\},\{j,k\}} \gets 0$
				\ELSIF{$\psi_1 = \psi_2$}
					\STATE $W^\gK_{\{i,j\},\{k,\ell\}} \gets -c, \ \ W^\gK_{\{i,k\},\{j,\ell\}} \gets -c, \ \ W^\gK_{\{i,\ell\},\{j,k\}} \gets 2c$
				\ELSE
					\STATE $W^\gK_{\{i,j\},\{k,\ell\}} \gets -2c,\ \ W^\gK_{\{i,k\},\{j,\ell\}} \gets c, \ \ W^\gK_{\{i,\ell\},\{j,k\}} \gets c$
				\ENDIF
			\ENDFOR
			\STATE $\mW^\gK \gets \mW^\gK + (\mW^\gK)^\top$ \COMMENT{To symmetrize.}
			\ENSURE $\mW^{\gGt} \gets \mM + \mW^\gK$
		\end{algorithmic}
	\end{algorithm}
	
	The intuition behind Algorithm \ref{algo:construction} is that the negative weight will be assigned to the edge that connects the two nodes that have the highest combined node degree.
	In Lines 8-9, if all three edges have the same combined node degree then we set all three weights to zero.
	In Lines 10-11, the edge with lowest combined node degree is set to $2c$, while the other edges that attain the same combined node degree are set to $-c$.
	In Line 13, the edge with highest combined node degree is set to $-2c$, while the other edges  are set to $c$.
	It is clear that the SoS constraints will be fulfilled for each quadruple $i < j< k< \ell$.
	Finally, we note that if $c=0$ then simply the same input, $\mM$, is returned.
	The latter implies that, for the \textit{optimal} value of $c$, Algorithm \ref{algo:construction} cannot return a weight matrix with lower algebraic connectivity than that of $\mM$.\footnote{Recall that the SDP problem \eqref{eq:opt_sdp} would attain an algebraic connectivity equal to that of $\mM$ if the optimal solution is $\vys \vys^\top$.}
	
	Recall that $\lambda_2(\mLambdat) = \lambda_2(\mL^{\gGt})$.
	In Figure \ref{fig:construction}, we ran Algorithm \ref{algo:construction} with input graph $\mUpsilont \mXt \mUpsilont$ equal to the graph in Figure (\ref{fig:final_example}d), and $c \in [0, 0.6]$.
	For each $c$, we plotted the algebraic connectivity of our construction.
	We observe that when $c=0$, in effect $\lambda_2(\mLambdat) = -0.24$ as pointed in Figure (\ref{fig:final_example}d).
	In this example, the optimal value of $c$ is $0.32$ and attains a $\lambda_2(\mLambdat)$ of $0.9368$, which is very close to the value $\lambda_2(\mLambdat) = 0.95$ found by CVX (see Figure (\ref{fig:final_example}g)).
	Finally, we also plot the Kneser graph weights for $c=0.32$ following the construction in Algorithm \ref{algo:construction}.
	\begin{figure}[!t]
		\centering	
		\includegraphics[width=0.49\linewidth]{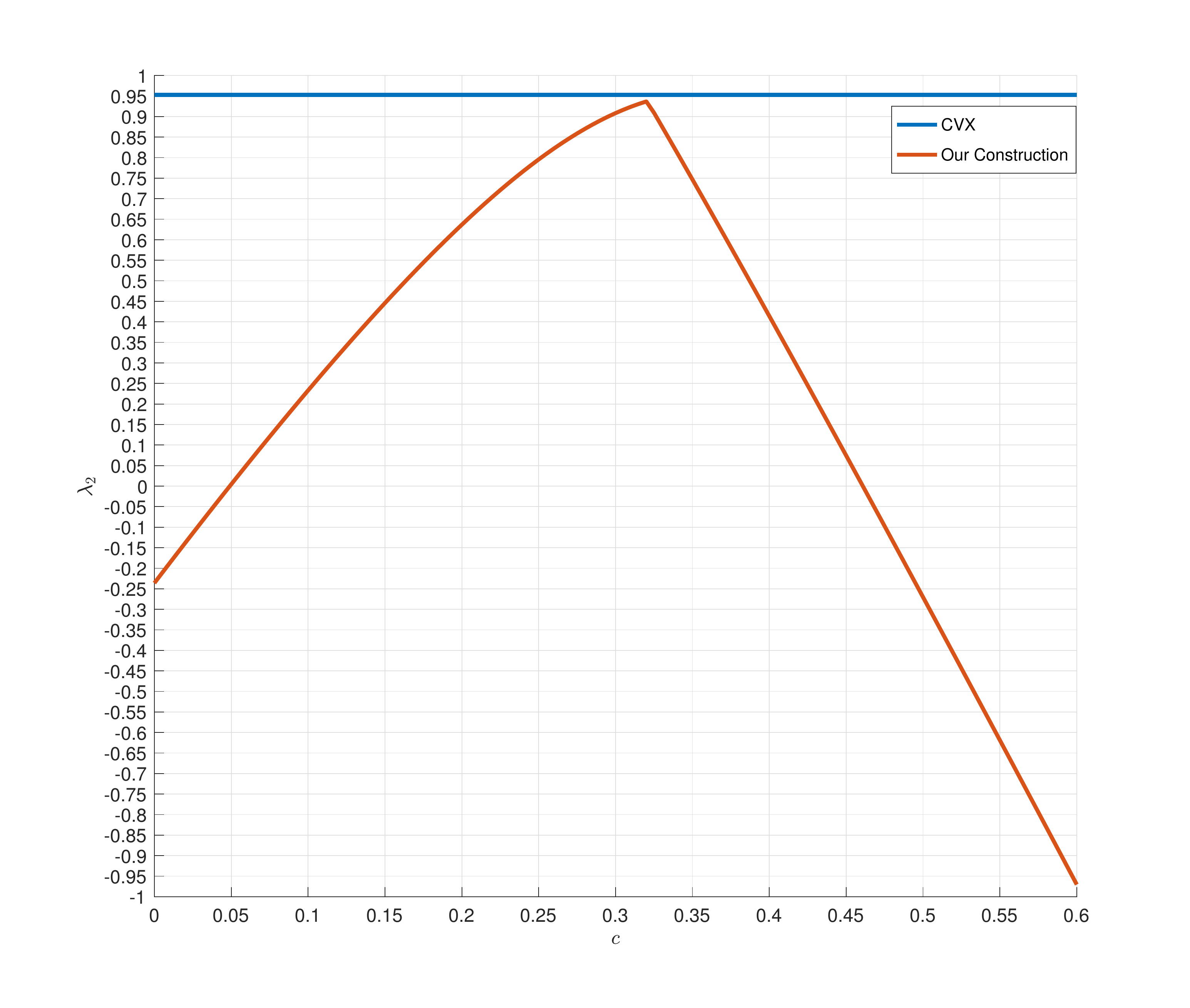}
		\hfill
		\includegraphics[width=0.49\linewidth]{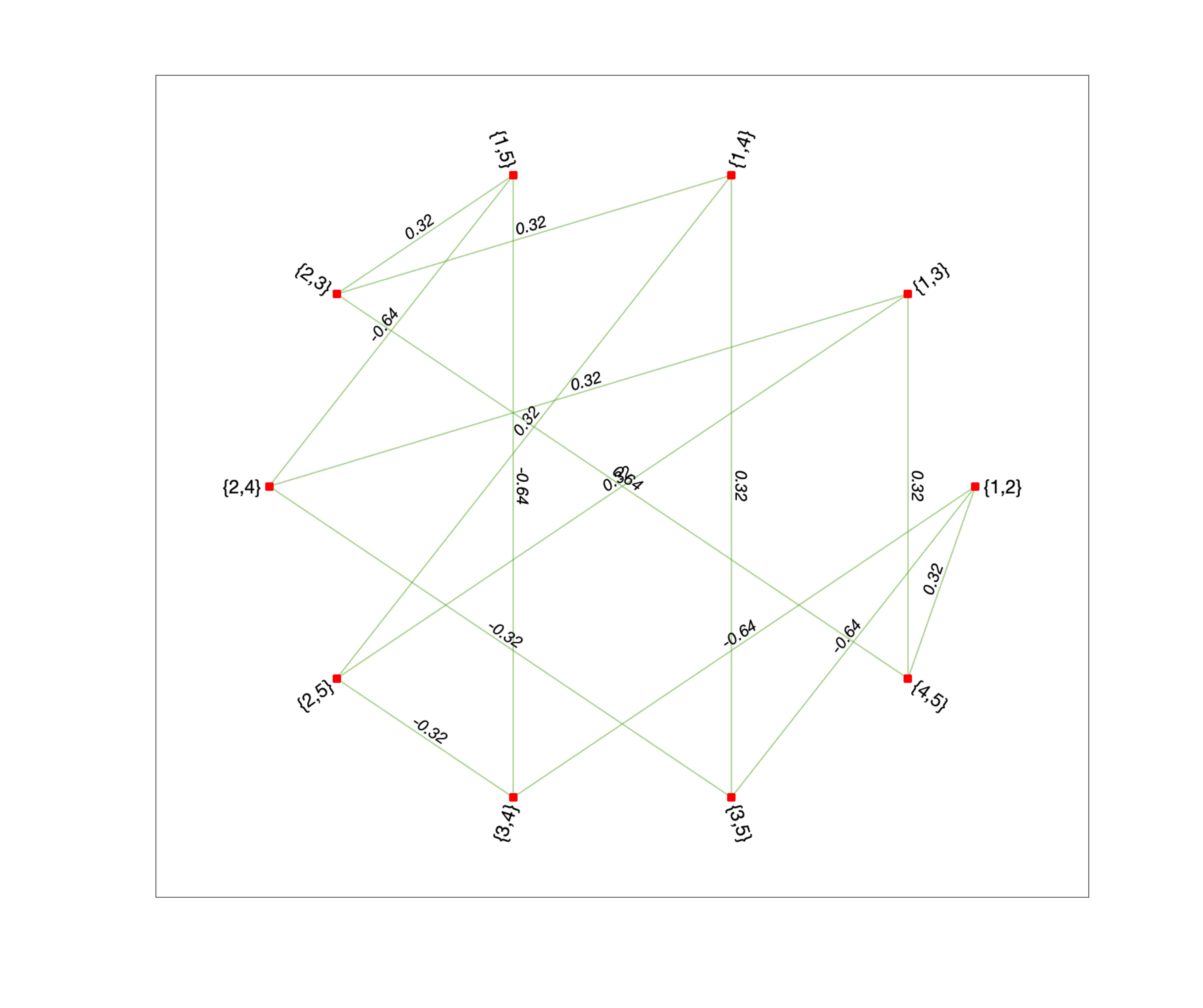}
		\caption{(Left) The blue line is the algebraic connectivity found by CVX, i.e., $0.95$ as pointed in Figure (\ref{fig:final_example}g).
		The red line is the algebraic connectivity of our construction in Algorithm \ref{algo:construction} for different values of $c \in [0, 0.6]$.
		(Right) The Kneser graph weights for the optimal $c= 0.32$, which in effect differs from the weights found by CVX in Figure (\ref{fig:final_example}f).
		 }
		\label{fig:construction}
	\end{figure}

\end{document}